\documentclass{article}
\usepackage{newtxtext}

\usepackage[preprint]{neurips_2026}
\usepackage{siunitx}
\usepackage{wrapfig}
\usepackage{subcaption}
\usepackage{adjustbox}
\usepackage{natbib} 
\usepackage{amsmath, amsthm}
\usepackage{amssymb}
\usepackage{mathtools}
\usepackage{verbatim} 
\newtheorem{proposition}{Proposition}

\usepackage{xcolor}

\usepackage[utf8]{inputenc} 
\usepackage[T1]{fontenc}    
\usepackage{hyperref}       
\usepackage{url}            
\usepackage{booktabs}       
\usepackage{amsfonts}       
\usepackage{nicefrac}       
\usepackage{microtype}      
\usepackage{xcolor}         
\usepackage[normalem]{ulem}

\title{Hypernetworks for Dynamic Feature Selection}

\author{%
 Javier Fumanal-Idocin \\
  University of Essex\\
  Wivenhoe Park, Colchester, United Kingdom \\
  \texttt{j.fumanal-idocin@essex.ac.uk} \\
  \And
  Raquel Fernandez-Peralta \\
  Slovak Academy of Sciences \\
   Štefánikova 49, 814 73 Bratislava, Slovakia\\
\texttt{raquel.fernandez@mat.savba.sk}
  \AND
   Javier Andreu-Perez \\
  University of Essex\\
  Wivenhoe Park, Colchester, United Kingdom \\
  \texttt{j.andreu-perez@essex.ac.uk} \\
}

\begin{document}

\maketitle

\begin{abstract}
   Dynamic feature selection (DFS) is a machine learning framework in which features are acquired sequentially for individual samples under budget constraints. The exponential growth in the number of possible feature acquisition paths forces a DFS model to balance fitting specific scenarios against maintaining general performance, even when the feature space is moderate in size. In this paper, we study the structural limitations of existing DFS approaches to achieve an optimal solution. Then, we propose \textsc{Hyper-DFS}, a hypernetwork-based DFS approach that generates feature subset-specific classifier parameters on demand. We show that the use of hypernetworks compared to mask-embedding methods results in a smaller structural complexity bound. We also use a Set Transformer encoding to create a smooth conditioning space for the hypernetwork, so that functionally similar tasks are also geometrically close. In our benchmarks, \textsc{Hyper-DFS} outperforms all state-of-the-art approaches on synthetic and real-life tabular data. It is also competitive or superior across all image datasets tested, and shows substantially stronger zero-shot generalisation to feature subsets never seen during training than existing DFS approaches.
\end{abstract}

\section{Introduction}

Machine learning (ML) has achieved remarkable success in many real-life scenarios \citep{paleyes2022challenges}. However, the deployment of ML systems in budget-constrained environments still presents difficulties. For example, data is often incomplete and subject to acquisition costs that can vary dynamically with operational constraints \citep{erion2022cost}. Dynamic feature selection (DFS), also known as active feature learning, is a ML paradigm designed for such settings, in which a different set of features is chosen for each processed sample based on the information available \citep{karayev2013dynamic}. Because of this, the model can be adapted to different conditions where some features are more relevant to a particular case, or where economic and time constraints prevent access to some information.

Existing approaches to DFS are broadly categorised into three paradigms: generative models, reinforcement learning-based policies and greedy acquisition methods. Generative models learn the conditional distribution of the missing features \citep{ma2019eddi}. Reinforcement learning-based approaches frame the sequential feature acquisition as a Markov Decision Process, learning a policy that maximises a long-term reward signal \citep{janisch2019classification}. Greedy methods query at each step the feature that optimises an immediate utility metric, with Conditional Mutual Information (CMI) emerging as a particularly popular criterion \citep{gadgil2024estimating}, although some concerns have been raised about the validity of this metric \citep{norcliffe2025stochastic}. These approaches generally use a single shared predictor for all possible observation statuses, as having a dedicated predictor for each is impossible due to their exponential number. Approximated per-subset predictors have been proposed in the literature \citep{valancius2024acquisition}, but they incur significant computational and memory overhead that prevents scaling to large feature spaces.

Although the value proposition of DFS is clear, existing approaches present several limitations: lack of clear superiority compared to static selection models \citep{erion2022cost}; the DFS models' decisions can be unreliable during the sequential acquisition phase \citep{fumanal2025dynamic}; and tendency to overfit to small subsets of features, especially relevant if the acquisition policy is heavily biased towards selecting the same features for every sample \citep{liu2025exploring}. 

In this paper, we address these issues by studying the conceptual limitations of existing DFS models. More precisely, we study how feasible it is for a DFS classifier to return optimal predictions for every possible feature subset. We study the difference between the ideal objective and the tractable training objective used in the literature to train DFS predictors. We also argue that mask-concatenation approaches, in which the observed subset $S$ is encoded as a conditioning signal to a shared neural network, are instances of \emph{embedding methods} as described in \citet{galanti2020modularity}, which means that they incur a complexity requirement that scales with both the input dimension and the task-conditioning dimension simultaneously, rather than with each independently. Then, we propose \textsc{Hyper-DFS}, a hypernetwork-based framework that generates subset-specific classifier parameters on demand using a Set Transformer to encode the task into a continuous representation. We show how to train the DFS policy for \textsc{Hyper-DFS} and benchmark it against other previous single and multi-model DFS approaches.

In summary, our contributions are the following:

\begin{itemize}
    \item We study the fundamental limitations of existing DFS approaches, and show how tractable objectives diverge from the ideal DFS solution and how a hypernetwork can close this gap.

    \item We propose \textsc{Hyper-DFS}, a hypernetwork-based framework that generates subset-specific classifiers on demand. We use Set Transformers to map discrete feature masks into a continuous representation that promotes smooth interpolation over different tasks, which also helps generalisation to feature subsets unseen during training.

    \item We identify and address two hypernetwork training issues particularly relevant in the DFS setting: variable subset cardinality causes variance drift in the generated weights, which we solve using a feature absence embedding vector; and multiple conditioning tasks make the hypernetwork training unstable, which we solve using a controlled distribution of masks per batch during training. 
    
     \item We evaluate \textsc{Hyper-DFS} on a broad benchmark suite spanning synthetic and real data, in tabular and image domains, demonstrating consistent improvements over single-model and multi-model baselines.
\end{itemize}

\section{Preliminaries}
\paragraph{Notation.}
Let $\mathbf{x} = (x_1, \ldots, x_M) \in \mathcal{X} \subseteq \mathbb{R}^M$ denote a sample with $M$ features, $y \in \mathcal{Y}$ its label, $S \subseteq [M] = \{1, \ldots, M\}$ a subset of observed features and $\bar{S} = [M] \setminus S$ for its complement. We denote as $\mathbf{I}_S \in \{0,1\}^{M}$ the binary mask that marks which features are present in $S$, which we also call the \emph{knowledge status}. We use $\mathbf{x}_S = (x_j)_{j \in S} \in \mathbb{R}^{|S|}$ to denote the subvector of observed entries. We use $S_t$ to indicate the feature set observed in the moment $t$ of the sequential feature acquisition process. 
\subsection{Dynamic Feature Selection}

In standard supervised learning, a fixed set of features is assumed to be available for every sample at both training and test time. DFS relaxes this assumption by allowing features to be acquired sequentially, on a per-instance basis and under budget constraints \citep{karayev2013dynamic}. A DFS system consists of two interacting components: a \emph{selector} and a \emph{predictor}. The selector $q_\psi\colon \mathbb{R}^{M} \times \{0,1\}^{M} \to \{1,\dots,M\}$ selects the feature to query. The predictor $f_\theta \colon \mathbb{R}^{M} \times \{0,1\}^{M} \to \mathcal{Y}$ produces a prediction from the observed features and knowledge status. Since accommodating variable-sized inputs in the predictor can be impractical, a common approach is the \emph{impute-then-predict} paradigm \citep{lemorvan2020}, which consists of imputing the missing features $\bar{S}$ before prediction. The imputation is typically done with the mean, although more sophisticated generative schemes have also been studied \citep{ma2019eddi}; in non-linear problems, however, the choice of imputation has limited impact on predictive performance, as model expressivity largely compensates for imputation errors \citep{lemorvan2025}. We denote by $\tilde{\mathbf{x}}_S$ the vector $\mathbf{x}_S$ with values in $\bar{S}$ replaced by their means. The predictor is then trained to minimise the expected loss over feature subsets:
\begin{equation}
    \theta^* = \arg \min_\theta \,\mathbb{E}_{S}\,\mathbb{E}_{x,y}\,
    \mathcal{L}(f_\theta(\tilde{\mathbf{x}}_S, \mathbf{I}_S), y),
    \label{eq:single_model_obj}
\end{equation}
where the distribution of the feature sets can be given by a uniform distribution, or by the selector. The selector is typically trained jointly with the predictor, either through reinforcement learning~\citep{janisch2019classification, yoon2018invase}, greedy maximisation of an information-theoretic acquisition criterion, like CMI~\citep{covert2023learning, gadgil2024estimating, chattopadhyayvariational}, or hand-crafted scoring functions~\citep{norcliffe2025stochastic}.

\subsection{Hypernetworks}
The idea of using one neural network to generate the weights of another has roots in early work on fast weight programmers \citep{schmidhuber1992learning}, where a ``slow'' network learned to produce context-dependent weight changes for a ``fast'' network. This concept was formalised and scaled to modern deep learning architectures by \citet{ha2017hypernetworks}, who introduced the term \emph{hypernetwork} to describe a network that takes a conditioning input and outputs the parameters of a separate \emph{primary} network.

Formally, let $g_\phi$ denote the hypernetwork with parameters $\phi$, which produces the weights $\theta_z = g_\phi(z)$ of the primary network $f_\theta$ conditioned on an input $z$. The output for a data point $x$ is then $\hat{y} = f_{g_\phi(z)}(x)$, and the hypernetwork parameters are optimised end-to-end:
\begin{equation}
\min_{\phi}\ \mathbb{E}_{x,y,z}\left[\mathcal{L}\left( f_{g_\phi(z)}(x),\, y \right)\right].
\end{equation}
 Because the loss is backpropagated through both $f_\theta$ and $g_\phi$, the hypernetwork learns weight-generation strategies directly optimised for the downstream task. A key property of hypernetworks is that they encode an entire family of related models within a single set of parameters $\phi$, achieving a form of soft weight sharing across conditioning inputs \citep{chauhan2024hypernetworks}.

\section{Structural Risks in Dynamic Feature Selection}
\label{sec:structural_risk}

\subsection{Compromises in DFS predictors}

The ideal DFS predictor would require a Bayes-optimal parameter set, $\theta^*_S$, for every subset, solving:
\begin{equation}\label{eq:ideal_obj}
   \theta^*_S = \arg \min_{\theta_S}
\,\mathbb{E}_{x,y}\,\mathcal{L}(f_{\theta_S}(\mathbf{x}_S), y),
\qquad \forall S \subseteq [M].
\end{equation}
However, this is not generally feasible in practice as it would require $2^M$ different parametrizations. Because of that, prior work in DFS has mostly focused on one shared parametrization that amortises the cost of learning the $2^M$ individual predictors. This shared predictor is normally trained first using Eq.~\eqref{eq:single_model_obj} as target loss with a uniform distribution over $S$, so that it performs well on average on any feature subset. Then, it is trained using feature sets sampled from the selector $q_\psi$, so that: $S_{t+1} = S_t \cup \{i_t\}$ with $i_t \sim q_\psi(\tilde{\mathbf{x}}_{S_t}, \mathbf{I}_{S_t})$. This two-step solution is designed to mitigate the problems that arise because the number of possible knowledge statuses is $2^M$.  The first phase encourages $f_\theta$ to perform well on arbitrary subsets; the second phase concentrates the gradient signal on the subsets that $q_\psi$ actually selects, which typically form a much smaller and more structured region of $2^{[M]}$. 

This amortisation is what makes the DFS problem tractable, but it comes at a cost. Because a single $\theta$ must amortise the cost of such a big task space simultaneously, it depends on how well the Bayes-optimal predictors can be represented under the same shared structure. When this can be done, the compromise is mild. Otherwise, the shared $\theta$ is optimal on average but not necessarily close to optimal on any individual subset.
Conflicts can also happen at the gradient level, so that updates beneficial for one subset actively degrade performance on another \citep{liu2021cagrad}. Moreover, subsets that $q_\psi$ rarely or never visits receive little or no gradient signal at all, so the predictor may systematically underperform on them, despite the uniform-subset pre-training phase \citep{fumanal2025dynamic}. So, in practical settings, we can assume that there will be some degree of compromise for some feature sets \citep{muller2025pattern, lobo2025primer}.

\subsection{From Ideal to Tractable: Hypernetworks}
\label{subsec:hypernet_objective}

If we replace the discrete lookup table $\{\theta_S\}_{S \subseteq [M]}$ in the ideal objective (Eq.~\eqref{eq:ideal_obj}) with a function $g_\phi \colon \{0,1\}^M \to \Theta$ that generates model parameters from the knowledge status $\mathbf{I}_S$:
\begin{equation}
    \phi^* = \arg \min_\phi \,\mathbb{E}_{S}\,
    \mathbb{E}_{x,y}\,\mathcal{L}\bigl(f_{g_\phi(\mathbf{I}_S)}(\tilde{\mathbf{x}}_S),\, y\bigr),
    \label{eq:hypernet_obj}
\end{equation}
we naturally obtain a hypernetwork objective \citep{ha2017hypernetworks}. Variable $g_\phi$ generates the parameters of the primary network $f$, which then produces predictions from the input $\mathbf{\tilde{x}}_S$. Compared with the optimal solution (Eq.~\eqref{eq:ideal_obj}), the cost of learning and storing an exponential set of parameters is replaced by the cost of learning a single network $g_\phi$. Compared with the tractable solution (Eq.~\eqref{eq:single_model_obj}), the predictor parameters are now a function of $\mathbf{I}_S$, so we are not forced to find a single primary network to fit all subsets. 
\citet{galanti2020modularity} formalise the structural advantage of hypernetworks over mask-conditioned predictors. A mask-conditioned predictor is an instance of an \emph{embedding method}: the subset $S$ enters the network as a conditioning signal that is fused with the data features. For data dimension $m_1$, task dimension $m_2$, and target-function smoothness parameter $r$ (in the DFS setting, $m_1 = M$ and $m_2 = M$ for binary-mask conditioning), Theorem~4 in \citet{galanti2020modularity} shows that a hypernetwork primary network of complexity $\mathcal{O}(\varepsilon^{-m_1/r})$ is sufficient to achieve error $\leq \varepsilon$, while their Theorems~2 and 3 show that the primary network of any embedding method must have complexity at least $\Omega(\varepsilon^{-(m_1+m_2)})$.

Prior work also provides evidence that hypernetwork training with gradient descent has been successfully applied for amortised inference across large task families~\citep{zhao2020meta, beck2023hypernetworks, voncontinual}. However, the amortisation advantage also depends on the geometry of the conditioning space because $g_\phi$ is a neural network with bounded Lipschitz constant~\citep{bartlett2017spectrally}. This means that it cannot produce very different primary networks for nearby conditioning inputs. So, if the optimal parameters $\theta^*_S$ change abruptly between similar subsets, the hypernetwork is forced into a compromise. We mitigate this in our proposal by encoding $S$ through a 
Set Transformer (Section~\ref{subsec:subset_encoding}). This produces a continuous representation of the knowledge state, in which feature subsets that induce similar Bayes-optimal predictors map to nearby points. In this way, the 
Lipschitz constraint helps generalisation rather than becoming a limitation.

\section{Proposed Method}
In this section, we introduce a new DFS approach: \textsc{Hyper-DFS}. Unlike prior work that relies on a conditioning vector to model the observation patterns over the same predictor model, \textsc{Hyper-DFS} generates subset-specific predictors on demand via a hypernetwork.
To model the observation status, we use a Set Transformer representation \citep{lee2019set}, which maps discrete observation masks into a smooth, functional-aware geometry. Finally, to maintain the size of the hypernetwork small, \textsc{Hyper-DFS} uses a neural network to reduce the size of the primary network input, called the \emph{compressor}, which is common practice in the hypernetworks literature~\citep{chauhan2024hypernetworks}.

\subsection{Smooth Feature Set Encoding for \textsc{Hyper-DFS}}
\label{subsec:subset_encoding}

A key design question for \textsc{Hyper-DFS} is how to represent the knowledge status as a conditioning signal for the hypernetwork $g_\phi$. Prior work uses binary masks. We argue this is a poor choice because binary masks treat all feature positions as interchangeable. This means that the encoding has no way to represent that some features are semantically related. So, two subsets that differ only in one semantically similar feature receive representations as distant as two subsets that differ in one entirely unrelated feature.

We instead propose encoding the observed feature subset using a Set Transformer encoder built from Induced Set Attention Blocks (ISAB)~\citep{lee2019set}. Let $\mathbf{e}^0_i, \mathbf{e}^1_i \in \mathbb{R}^d$ denote two learned embeddings for feature $i$, representing its absent and present states respectively, where $d$ is the token-embedding dimension. Given the knowledge status $\mathbf{I}_S \in \{0,1\}^M$, we form the input matrix $\mathbf{T}(\mathbf{I}_S) \in \mathbb{R}^{M \times d}$ whose $i$-th row is $\mathbf{t}_i = I_{S,i}\,\mathbf{e}^1_i+\bigl(1 - I_{S,i}\bigr)\,\mathbf{e}^0_i$, and apply the Set Transformer $f_\omega \colon \mathbb{R}^{M \times d} \to \mathbb{R}^{M \times d}$ with parameters $\omega$, followed by sum-pooling along the token dimension and a neural output transformation $f_\rho \colon \mathbb{R}^d \to \mathbb{R}^{d'}$ with parameters $\rho$ that maps the pooled representation to the conditioning vector of dimension $d'$:
\begin{equation} \label{eq:subset_encoding}
    \mathbf{z}_{\omega,\rho}(\mathbf{I}_S) = f_\rho\!\left(
        \sum_{i=1}^{M} \bigl[f_\omega(\mathbf{T}(\mathbf{I}_S))\bigr]_i
    \right),
    \qquad
    \tilde{\mathbf{z}}_{\omega,\rho}(\mathbf{I}_S) =
    \frac{\mathbf{z}_{\omega,\rho}(\mathbf{I}_S)}
         {\|\mathbf{z}_{\omega,\rho}(\mathbf{I}_S)\|_2},
\end{equation}
where $\bigl[f_\omega(\mathbf{T}(\mathbf{I}_S))\bigr]_i \in \mathbb{R}^d$ denotes the $i$-th output token of the encoder and $\tilde{\mathbf{z}}_{\omega,\rho}(\mathbf{I}_S) \in \mathbb{R}^{d'}$ is the conditioning vector passed to the hypernetwork. Both $d$ and $d'$ are hyperparameters of the architecture. The self-attention mechanism allows each feature's contribution to be modulated by the other features present, capturing inter-feature dependencies in the conditioning signal. Using distinct embeddings for absent and present states also prevents a systematic bias that would arise with presence-only embeddings: if the pooled representation depended only on $\sum_{i \in S} \mathbf{e}^1_i$, its norm would scale with $|S|$, biasing the normalisation statistics and gradient magnitudes towards the observed subset size. We discuss this in detail in Appendix~\ref{apx:weight_variance}. The parameters $(\omega, \rho)$ are trained end-to-end with the hypernetwork and primary network.

\subsection{Training the \textsc{Hyper-DFS}}
\label{sec:grad_dilution}

\paragraph{Training the predictor.} The predictor is trained using a variant of Eq.~\eqref{eq:hypernet_obj} with the encoding defined in Eq.~\eqref{eq:subset_encoding} and a compressor $c_{\eta}\colon\mathbb{R}^{M}\to\mathbb{R}^{M'}$, where $M'$ is a hyperparameter:
\begin{equation}
	\arg \min_{\phi, \omega,\rho,\eta} \mathbb{E}_{S}\,
	\mathbb{E}_{x,y}\,\mathcal{L}(f_{g_\phi(\tilde{\mathbf{z}}_{\omega, \rho}(\mathbf{I}_S))}(c_\eta (\mathbf{\tilde{x}}_S)),\, y).
	\label{eq:hyperdfs_ob_obj}
\end{equation}
However, this training objective has two problems. First, mixing many different knowledge statuses in the same batch can be problematic, especially at the beginning of training, as the gradient may cancel if the contributions from different $S$ point in opposing directions~\citep{yu2020gradient}. Secondly, the compressor receives a backward signal that is itself modulated by the generated, mask-dependent primary network, which at initialisation behaves as a collection of random projection heads. This is the setting in which shared representations are known to be most prone to collapse~\citep{chen2021exploring}. We did not empirically observe this to be a problem in most cases, but it can be problematic in specific datasets. 

To mitigate the issue, we introduce two complementary modifications. First, we introduce a pre-training phase in which each batch uses a fixed small number of knowledge status vectors per batch, so that the batch uses a single primary network. This removes possible within-batch cancelling $S$-specific gradient directions. Second, we apply a linear learning-rate warm-up over the first $T_\mathrm{warm}$ steps, which allows the hypernetwork and the compressor to co-adapt before the optimiser takes large steps and the hypernetwork commits to a particular mapping. The problem is described in more detail in Appendix \ref{apx:grad_dilution}. The empirical study that shows the validity of the proposed solution is in Appendix \ref{app:ablations}.
 This pre-training phase produces a warm-started predictor that is subsequently fine-tuned jointly with the selector, as described in the next paragraph.

\paragraph{Training the selector.} The selector is trained similarly to other approaches in the literature \citep{chattopadhyayvariational}. It consists of a small neural network $q_\psi$ that takes as input the observed features and the knowledge status and produces a score for each candidate feature, then returns the feature with the highest score among the unobserved features. To train it, for each training sample $(\mathbf{x}, y)$, we draw a trajectory length $\tau \sim \mathrm{Uniform}(\{1, \ldots, B_{\max}\})$, where $B_{\max}$ is the maximum acquisition budget the model is expected to handle at test time, and let the selector pick $\tau$ features one at a time. At each step $t$, the selector produces scores $\mathbf{s}_\psi(\tilde{\mathbf{x}}_{S_t}, \mathbf{I}_{S_t}) \in \mathbb{R}^{M}$, with scores of already-observed features set to $-\infty$ so they cannot be selected again. The chosen feature $i_t$ is then used to update the knowledge status, $S_{t+1} = S_t \cup \{i_t\}$. Because $\arg\max$ is not differentiable, we replace it with the Gumbel-softmax with straight-through estimator~\citep{jang2017categorical, maddison2017concrete} during training, which lets gradients flow through each selection step. After $\tau$ steps, we apply the predictor to obtain $\hat{y} = f_{g_\phi(\tilde{\mathbf{z}}_{\omega,\rho}(\mathbf{I}_{S_\tau}))}(c_\eta(\tilde{\mathbf{x}}_{S_\tau}))$ and compute the cross-entropy loss $\mathcal{L}(\hat{y}, y)$, which is backpropagated through the full trajectory to update the selector and predictor parameters jointly. At test time, the Gumbel-softmax is no longer needed, and the selector simply picks the highest-scoring unobserved feature at each step, recovering the deterministic $q_\psi$ defined above.


\section{Experimentation}

\subsection{Datasets and Methods}

\paragraph{Datasets.} We evaluate \textsc{Hyper-DFS} on a benchmark suite spanning synthetic, real-world tabular, and image domains. \textit{Synthetic} datasets: the three binary tasks of~\citet{yoon2018invase} and  \textbf{Cube}~\citep{kompella2016optimal,zannone2019odin}, an 8-class, 20-feature task, are included as standard benchmarks from the DFS literature. We additionally introduce two diagnostic benchmarks where subset-adaptive specialisation is necessary to obtain good performance: \textbf{Synergistic Pairs}, where individual features carry zero marginal information, and \textbf{Proxy Substitution}, where the Bayes-optimal predictor changes with the observed subset (see Appendix~\ref{apx:datasets_details} for details about these two new datasets). \textit{Tabular:} nine datasets spanning medical, financial, genetic, and scientific domains: \textbf{Diabetes}~\citep{strack2014impact}, \textbf{Heart Disease}~\citep{Detrano1989InternationalAO},  \textbf{Cirrhosis}~\citep{fleming2013counting}, \textbf{Wine}~\citep{wine_109}, \textbf{Yeast}~\citep{yeast_110}, \textbf{Bank Marketing}~\citep{moro2014data}, \textbf{California housing}~\citep{pace1997sparse}, \textbf{MiniBooNE}~\citep{roe2005boosted}, \textbf{METABRIC}~\citep{curtis2012genomic}. Ranging from a few hundred to over $10^5$ samples and 8 to 47 features. \textit{Image:} \textbf{MNIST}~\citep{lecun2002}, \textbf{Fashion MNIST}~\citep{xiao2017fashion}, \textbf{SVHN}~\citep{netzer2011svhn}, \textbf{Imagenette}~\citep{howard2019}, a 10-class ImageNet subset, all evaluated under a patch-based sequential acquisition protocol, and \textbf{PCam}~\citep{veeling2018pcam}, a histopathology benchmark in which metastatic tissue must be localised within a lymph node patch.

\paragraph{\textsc{Hyper-DFS} approaches.} We evaluate a single-model and an ensembling variant. \textsc{Hyper-DFS} performs a single forward pass through the hypernetwork, producing one primary network per subset. \textbf{HyperEns} is a conventional ensemble of $K{=}5$ independently trained hypernetworks sharing the same selector. 

\paragraph{Baselines.}
We compare against eight state-of-the-art DFS methods representing the main paradigms in the
literature. \textbf{DIME}~\citep{gadgil2024estimating} and \textbf{VIP}~\citep{chattopadhyayvariational} are
greedy acquisition methods: DIME maximises a learned estimate of CMI, whilst VIP selects features via a variational approximation to information gain.
\textbf{CWCF}~\citep{janisch2019classification} and \textbf{INVASE}~\citep{yoon2018invase} are reinforcement learning approaches: CWCF trains a Q-learning agent for cost-sensitive sequential acquisition, whilst INVASE uses an actor-critic network for instance-wise variable selection. \textbf{SEFA}~\citep{norcliffe2025stochastic} uses stochastic latent encodings to account for unobserved feature values during acquisition. \textbf{RePa}~\citep{fumanal2025dynamic} learns weight transforms to adjust the classifier to different feature subsets. \textbf{EDDI}~\citep{ma2019eddi} uses a variational autoencoder to impute missing values. For multi-model approaches, we include \textbf{AACO}~\citep{valancius2024acquisition}, which is a non-greedy method that maintains a dictionary of subset-specific classifiers and uses nearest-neighbour conditionals to guide sequential acquisition. We also construct three multi-model baselines based on VIP acquisition policy, using the same neural architecture as in the other baselines for the predictors: \textbf{Ensemble} trains an ensemble of $K=5$ independent predictors and aggregates predictions by averaging; \textbf{Card.} partitions the parameter budget into 2, each trained on subsets within a specific cardinality range and used for routing at inference time. The cardinality range boundaries are chosen to distribute the cardinality range of the feature subsets evenly; \textbf{MoE} uses a learned routing function that takes as input the knowledge status and weights the decision of $K=5$ models.

\paragraph{Evaluation Protocol.}
Experiments use a uniform feature cost setting, reporting average F1 score across all budget levels from $2$ to $10$ acquired features. All methods are evaluated using stratified 5-fold cross-validation and we report the mean and standard deviation across folds. To test feature subset generalisation, we evaluate the DFS methods on random feature subsets withheld entirely from training. Ablations and further results are detailed in Appendix~\ref{apx:additional_results}. Full implementation details, hyperparameter search ranges, and reproducibility information are provided in Appendix~\ref{apx:implementation}.

\subsection{Empirical Results and Benchmarking}

\begin{table}[t]
	\centering
	\caption{AUAC-F1 on synthetic benchmarks (mean over folds, \%).}
	\label{tab:synthetic_auac_f1}
	\adjustbox{width=\linewidth}{
		\begin{tabular}{
				l
				*{7}{S[table-format=2.2, detect-weight=true]}
				|
				*{4}{S[table-format=2.2, detect-weight=true]}
				|
				*{2}{S[table-format=2.2, detect-weight=true]}
			}
			\toprule
			& \multicolumn{7}{c}{Single-model} & \multicolumn{4}{c}{Multi-model} & \multicolumn{2}{c}{Hypernetworks} \\
			\cmidrule(lr){2-8}\cmidrule(lr){9-12}\cmidrule(lr){13-14}
			{Dataset} & {CWCF} & {DIME} & {EDDI} & {INVASE} & {RePa} & {SEFA} & {VIP} & {AACO} & {Card.} & {Ensemble} & {MoE} & {HyperEns} & {Hyper-DFS} \\
			\midrule
			Cube & 29.81 & 31.97 & 18.20 & 22.99 & \bfseries 46.19 & 43.05 & 41.20 & 38.70 & 35.00 & \bfseries 40.40 & 39.15 & {\bfseries\underline{47.47}} & 41.42 \\
			{} & {\scriptsize\textcolor{gray}{$\pm$2.88}} & {\scriptsize\textcolor{gray}{$\pm$0.79}} & {\scriptsize\textcolor{gray}{$\pm$2.04}} & {\scriptsize\textcolor{gray}{$\pm$1.79}} & {\scriptsize\textcolor{gray}{$\pm$0.51}} & {\scriptsize\textcolor{gray}{$\pm$0.82}} & {\scriptsize\textcolor{gray}{$\pm$1.14}} & {\scriptsize\textcolor{gray}{$\pm$0.94}} & {\scriptsize\textcolor{gray}{$\pm$0.89}} & {\scriptsize\textcolor{gray}{$\pm$1.76}} & {\scriptsize\textcolor{gray}{$\pm$1.93}} & {\scriptsize\textcolor{gray}{$\pm$1.18}} & {\scriptsize\textcolor{gray}{$\pm$0.85}} \\
			Sim1 & 69.60 & 67.47 & 65.15 & 67.76 & 75.97 & 74.99 & \bfseries 75.99 & \bfseries 74.80 & 57.77 & 72.05 & 72.70 & {\bfseries\underline{78.19}} & 75.87 \\
			{} & {\scriptsize\textcolor{gray}{$\pm$3.02}} & {\scriptsize\textcolor{gray}{$\pm$1.07}} & {\scriptsize\textcolor{gray}{$\pm$3.02}} & {\scriptsize\textcolor{gray}{$\pm$1.49}} & {\scriptsize\textcolor{gray}{$\pm$0.71}} & {\scriptsize\textcolor{gray}{$\pm$0.96}} & {\scriptsize\textcolor{gray}{$\pm$0.56}} & {\scriptsize\textcolor{gray}{$\pm$0.83}} & {\scriptsize\textcolor{gray}{$\pm$0.83}} & {\scriptsize\textcolor{gray}{$\pm$3.66}} & {\scriptsize\textcolor{gray}{$\pm$2.57}} & {\scriptsize\textcolor{gray}{$\pm$0.71}} & {\scriptsize\textcolor{gray}{$\pm$0.65}} \\
			Sim2 & 56.84 & 55.67 & 48.68 & 53.76 & 67.71 & 65.99 & {\bfseries\underline{68.10}} & \bfseries 64.80 & 53.81 & 62.20 & 60.54 & \bfseries 67.14 & 66.53 \\
			{} & {\scriptsize\textcolor{gray}{$\pm$4.07}} & {\scriptsize\textcolor{gray}{$\pm$2.72}} & {\scriptsize\textcolor{gray}{$\pm$2.17}} & {\scriptsize\textcolor{gray}{$\pm$2.43}} & {\scriptsize\textcolor{gray}{$\pm$0.17}} & {\scriptsize\textcolor{gray}{$\pm$0.88}} & {\scriptsize\textcolor{gray}{$\pm$0.67}} & {\scriptsize\textcolor{gray}{$\pm$0.78}} & {\scriptsize\textcolor{gray}{$\pm$4.17}} & {\scriptsize\textcolor{gray}{$\pm$3.59}} & {\scriptsize\textcolor{gray}{$\pm$1.77}} & {\scriptsize\textcolor{gray}{$\pm$0.70}} & {\scriptsize\textcolor{gray}{$\pm$0.66}} \\
			Sim3 & 57.43 & 56.91 & 38.97 & 51.67 & {\bfseries\underline{68.68}} & 63.68 & 62.95 & \bfseries 63.82 & 50.27 & 55.58 & 60.01 & \bfseries 67.22 & 66.51 \\
			{} & {\scriptsize\textcolor{gray}{$\pm$3.45}} & {\scriptsize\textcolor{gray}{$\pm$3.08}} & {\scriptsize\textcolor{gray}{$\pm$0.01}} & {\scriptsize\textcolor{gray}{$\pm$1.75}} & {\scriptsize\textcolor{gray}{$\pm$0.37}} & {\scriptsize\textcolor{gray}{$\pm$0.77}} & {\scriptsize\textcolor{gray}{$\pm$0.45}} & {\scriptsize\textcolor{gray}{$\pm$0.66}} & {\scriptsize\textcolor{gray}{$\pm$4.31}} & {\scriptsize\textcolor{gray}{$\pm$2.84}} & {\scriptsize\textcolor{gray}{$\pm$3.69}} & {\scriptsize\textcolor{gray}{$\pm$0.80}} & {\scriptsize\textcolor{gray}{$\pm$0.87}} \\
			ProxySub & 82.28 & 92.78 & 86.72 & 92.31 & \bfseries 96.77 & 96.28 & 96.61 & \bfseries 96.62 & 80.80 & 87.53 & 84.81 & {\bfseries\underline{96.86}} & 96.73 \\
			{} & {\scriptsize\textcolor{gray}{$\pm$7.43}} & {\scriptsize\textcolor{gray}{$\pm$3.39}} & {\scriptsize\textcolor{gray}{$\pm$0.19}} & {\scriptsize\textcolor{gray}{$\pm$0.87}} & {\scriptsize\textcolor{gray}{$\pm$0.28}} & {\scriptsize\textcolor{gray}{$\pm$0.25}} & {\scriptsize\textcolor{gray}{$\pm$0.36}} & {\scriptsize\textcolor{gray}{$\pm$0.36}} & {\scriptsize\textcolor{gray}{$\pm$5.98}} & {\scriptsize\textcolor{gray}{$\pm$5.22}} & {\scriptsize\textcolor{gray}{$\pm$8.54}} & {\scriptsize\textcolor{gray}{$\pm$0.26}} & {\scriptsize\textcolor{gray}{$\pm$0.25}} \\
			SynPairs & 52.43 & 53.28 & 48.48 & 49.69 & 51.53 & 63.29 & \bfseries 65.26 & \bfseries 61.76 & 51.80 & 60.50 & 60.66 & {\bfseries\underline{71.33}} & 67.08 \\
			{} & {\scriptsize\textcolor{gray}{$\pm$2.68}} & {\scriptsize\textcolor{gray}{$\pm$2.26}} & {\scriptsize\textcolor{gray}{$\pm$2.06}} & {\scriptsize\textcolor{gray}{$\pm$1.61}} & {\scriptsize\textcolor{gray}{$\pm$4.83}} & {\scriptsize\textcolor{gray}{$\pm$0.70}} & {\scriptsize\textcolor{gray}{$\pm$0.70}} & {\scriptsize\textcolor{gray}{$\pm$1.92}} & {\scriptsize\textcolor{gray}{$\pm$2.60}} & {\scriptsize\textcolor{gray}{$\pm$3.61}} & {\scriptsize\textcolor{gray}{$\pm$3.05}} & {\scriptsize\textcolor{gray}{$\pm$0.62}} & {\scriptsize\textcolor{gray}{$\pm$1.07}} \\
			\midrule
			Avg Rank & 11.67 & 11.00 & 14.33 & 12.83 & \bfseries 4.00 & 6.00 & 4.67 & \bfseries 7.00 & 13.17 & 9.67 & 9.50 & {\bfseries\underline{1.67}} & 4.17 \\
			Avg AUAC & 58.07 & 59.68 & 51.03 & 56.36 & 67.81 & 67.88 & \bfseries 68.35 & \bfseries 66.75 & 54.91 & 63.04 & 62.98 & {\bfseries\underline{71.37}} & 69.02 \\
			\bottomrule
	\end{tabular}}
\end{table}
\begin{figure}[t]
    \centering
    \begin{subfigure}[b]{0.32\linewidth}
        \includegraphics[width=\linewidth]{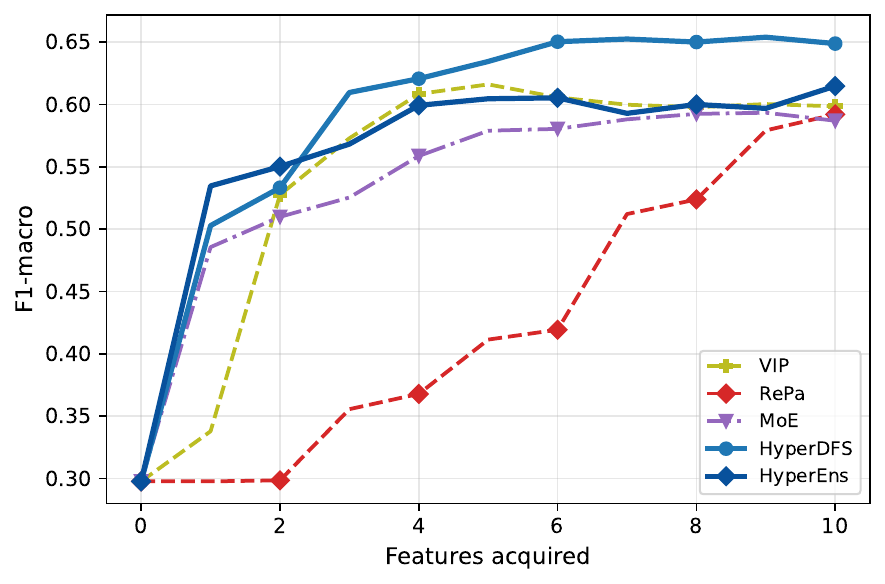}
        \caption{Diabetes}
    \end{subfigure}
    \hfill
    \begin{subfigure}[b]{0.32\linewidth}
        \includegraphics[width=\linewidth]{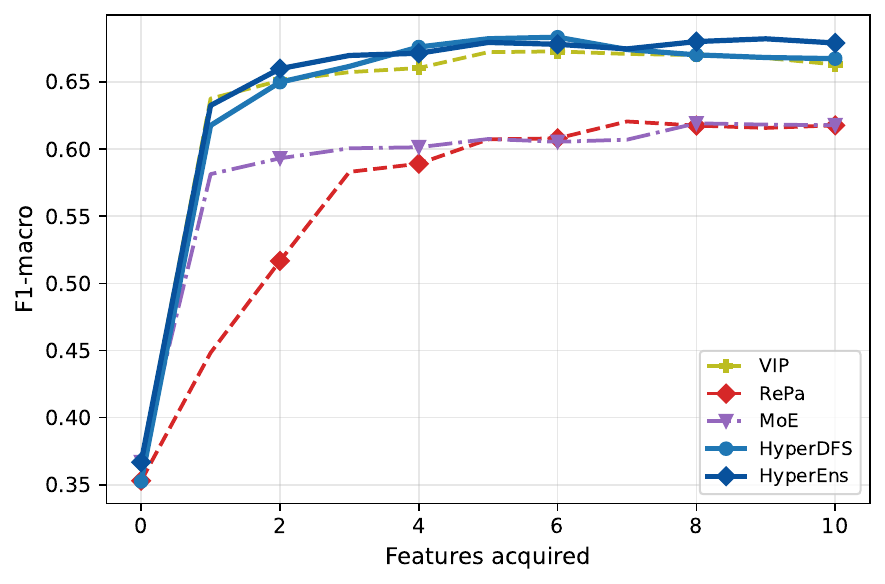}
        \caption{Metabric}
    \end{subfigure}
    \hfill
    \begin{subfigure}[b]{0.32\linewidth}
        \includegraphics[width=\linewidth]{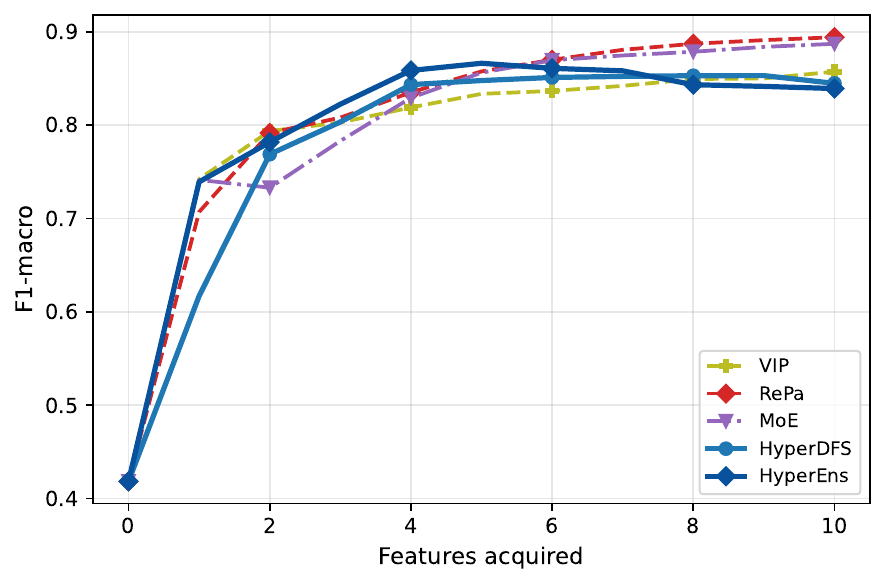}
        \caption{Miniboone}
    \end{subfigure}
    \caption{F1-macro according to the number of acquired features for the top-performing methods on three representative tabular datasets. Full acquisition curves for all datasets are provided in Appendix~\ref{apx:curves}.}
    \label{fig:acquisition_curves}
\end{figure}
\begin{table}[t]
	\centering
	\caption{AUAC-F1 on tabular benchmarks (mean $\pm$ std over 5-fold CV, \%).}
	\label{tab:tabular_auac_f1}
	\adjustbox{width=\linewidth}{
		\begin{tabular}{
				l
				*{7}{S[table-format=2.2, detect-weight=true]}
				|
				*{4}{S[table-format=2.2, detect-weight=true]}
				|
				*{2}{S[table-format=2.2, detect-weight=true]}
			}
			\toprule
			& \multicolumn{7}{c}{Single-model} & \multicolumn{4}{c}{Multi-model} & \multicolumn{2}{c}{Hypernetworks} \\
			\cmidrule(lr){2-8}\cmidrule(lr){9-12}\cmidrule(lr){13-14}
			{Dataset} & {CWCF} & {DIME} & {EDDI} & {INVASE} & {RePa} & {SEFA} & {VIP} & {AACO} & {Card.} & {Ensemble} & {MoE} & {HyperEns} & {Hyper-DFS} \\
			\midrule
			Bank & 54.31 & 63.20 & 63.20 & 70.12 & {\bfseries\underline{80.37}} & 79.73 & 78.39 & 79.19 & 50.18 & 80.15 & \bfseries 80.29 & \bfseries 80.26 & 79.73 \\
			{} & {\scriptsize\textcolor{gray}{$\pm$5.07}} & {\scriptsize\textcolor{gray}{$\pm$5.57}} & {\scriptsize\textcolor{gray}{$\pm$1.28}} & {\scriptsize\textcolor{gray}{$\pm$1.13}} & {\scriptsize\textcolor{gray}{$\pm$0.49}} & {\scriptsize\textcolor{gray}{$\pm$0.49}} & {\scriptsize\textcolor{gray}{$\pm$0.65}} & {\scriptsize\textcolor{gray}{$\pm$0.47}} & {\scriptsize\textcolor{gray}{$\pm$7.20}} & {\scriptsize\textcolor{gray}{$\pm$0.59}} & {\scriptsize\textcolor{gray}{$\pm$0.37}} & {\scriptsize\textcolor{gray}{$\pm$0.46}} & {\scriptsize\textcolor{gray}{$\pm$0.31}} \\
			California & 79.25 & 70.68 & 62.88 & 74.31 & 81.74 & \bfseries 82.25 & 79.86 & {\bfseries\underline{83.98}} & 69.37 & 83.60 & 82.46 & \bfseries 82.92 & 81.93 \\
			{} & {\scriptsize\textcolor{gray}{$\pm$1.39}} & {\scriptsize\textcolor{gray}{$\pm$4.31}} & {\scriptsize\textcolor{gray}{$\pm$1.50}} & {\scriptsize\textcolor{gray}{$\pm$1.45}} & {\scriptsize\textcolor{gray}{$\pm$1.02}} & {\scriptsize\textcolor{gray}{$\pm$0.25}} & {\scriptsize\textcolor{gray}{$\pm$0.61}} & {\scriptsize\textcolor{gray}{$\pm$0.29}} & {\scriptsize\textcolor{gray}{$\pm$6.98}} & {\scriptsize\textcolor{gray}{$\pm$0.34}} & {\scriptsize\textcolor{gray}{$\pm$0.50}} & {\scriptsize\textcolor{gray}{$\pm$0.46}} & {\scriptsize\textcolor{gray}{$\pm$0.76}} \\
			Cirrhosis & 44.72 & 46.41 & 39.11 & 43.25 & 49.92 & {\bfseries\underline{50.67}} & 49.64 & 47.10 & 45.63 & 49.50 & \bfseries 50.12 & \bfseries 49.64 & 48.66 \\
			{} & {\scriptsize\textcolor{gray}{$\pm$3.04}} & {\scriptsize\textcolor{gray}{$\pm$4.66}} & {\scriptsize\textcolor{gray}{$\pm$3.01}} & {\scriptsize\textcolor{gray}{$\pm$2.50}} & {\scriptsize\textcolor{gray}{$\pm$2.33}} & {\scriptsize\textcolor{gray}{$\pm$2.89}} & {\scriptsize\textcolor{gray}{$\pm$2.10}} & {\scriptsize\textcolor{gray}{$\pm$3.43}} & {\scriptsize\textcolor{gray}{$\pm$1.60}} & {\scriptsize\textcolor{gray}{$\pm$2.22}} & {\scriptsize\textcolor{gray}{$\pm$2.46}} & {\scriptsize\textcolor{gray}{$\pm$3.13}} & {\scriptsize\textcolor{gray}{$\pm$3.27}} \\
			Diabetes & 37.20 & 34.21 & 29.75 & 30.57 & 45.10 & 56.18 & \bfseries 59.18 & 56.13 & 50.72 & 53.52 & \bfseries 56.81 & 59.24 & {\bfseries\underline{62.81}} \\
			{} & {\scriptsize\textcolor{gray}{$\pm$6.54}} & {\scriptsize\textcolor{gray}{$\pm$3.11}} & {\scriptsize\textcolor{gray}{$\pm$0.01}} & {\scriptsize\textcolor{gray}{$\pm$1.42}} & {\scriptsize\textcolor{gray}{$\pm$7.85}} & {\scriptsize\textcolor{gray}{$\pm$2.54}} & {\scriptsize\textcolor{gray}{$\pm$1.80}} & {\scriptsize\textcolor{gray}{$\pm$2.05}} & {\scriptsize\textcolor{gray}{$\pm$8.44}} & {\scriptsize\textcolor{gray}{$\pm$4.33}} & {\scriptsize\textcolor{gray}{$\pm$2.90}} & {\scriptsize\textcolor{gray}{$\pm$4.65}} & {\scriptsize\textcolor{gray}{$\pm$1.99}} \\
			Heart & 72.19 & 73.45 & 68.00 & 71.98 & 81.55 & 76.12 & \bfseries 82.13 & 79.26 & 77.71 & 82.05 & \bfseries 82.81 & {\bfseries\underline{82.89}} & 80.80 \\
			{} & {\scriptsize\textcolor{gray}{$\pm$7.97}} & {\scriptsize\textcolor{gray}{$\pm$6.10}} & {\scriptsize\textcolor{gray}{$\pm$5.99}} & {\scriptsize\textcolor{gray}{$\pm$5.03}} & {\scriptsize\textcolor{gray}{$\pm$3.93}} & {\scriptsize\textcolor{gray}{$\pm$4.17}} & {\scriptsize\textcolor{gray}{$\pm$4.45}} & {\scriptsize\textcolor{gray}{$\pm$5.41}} & {\scriptsize\textcolor{gray}{$\pm$4.54}} & {\scriptsize\textcolor{gray}{$\pm$4.77}} & {\scriptsize\textcolor{gray}{$\pm$4.35}} & {\scriptsize\textcolor{gray}{$\pm$4.63}} & {\scriptsize\textcolor{gray}{$\pm$6.07}} \\
			Metabric & 36.68 & 46.82 & 45.11 & 45.22 & 59.73 & 63.20 & \bfseries 66.51 & 61.19 & 52.58 & \bfseries 62.74 & 60.78 & {\bfseries\underline{67.49}} & 67.03 \\
			{} & {\scriptsize\textcolor{gray}{$\pm$0.04}} & {\scriptsize\textcolor{gray}{$\pm$6.47}} & {\scriptsize\textcolor{gray}{$\pm$6.23}} & {\scriptsize\textcolor{gray}{$\pm$2.52}} & {\scriptsize\textcolor{gray}{$\pm$4.98}} & {\scriptsize\textcolor{gray}{$\pm$1.09}} & {\scriptsize\textcolor{gray}{$\pm$2.03}} & {\scriptsize\textcolor{gray}{$\pm$1.60}} & {\scriptsize\textcolor{gray}{$\pm$11.39}} & {\scriptsize\textcolor{gray}{$\pm$1.58}} & {\scriptsize\textcolor{gray}{$\pm$2.89}} & {\scriptsize\textcolor{gray}{$\pm$1.70}} & {\scriptsize\textcolor{gray}{$\pm$0.81}} \\
			Miniboone & 48.69 & 50.91 & 51.26 & 46.95 & \bfseries 85.72 & 71.57 & 83.15 & 85.05 & 82.09 & {\bfseries\underline{86.25}} & 84.38 & \bfseries 84.13 & 83.52 \\
			{} & {\scriptsize\textcolor{gray}{$\pm$6.29}} & {\scriptsize\textcolor{gray}{$\pm$6.26}} & {\scriptsize\textcolor{gray}{$\pm$6.03}} & {\scriptsize\textcolor{gray}{$\pm$3.07}} & {\scriptsize\textcolor{gray}{$\pm$0.62}} & {\scriptsize\textcolor{gray}{$\pm$4.60}} & {\scriptsize\textcolor{gray}{$\pm$0.12}} & {\scriptsize\textcolor{gray}{$\pm$0.22}} & {\scriptsize\textcolor{gray}{$\pm$0.94}} & {\scriptsize\textcolor{gray}{$\pm$0.41}} & {\scriptsize\textcolor{gray}{$\pm$0.30}} & {\scriptsize\textcolor{gray}{$\pm$1.10}} & {\scriptsize\textcolor{gray}{$\pm$0.53}} \\
			Wine & 86.29 & 86.14 & 75.98 & 80.88 & 94.95 & \bfseries 95.81 & 92.32 & \bfseries 95.45 & 93.29 & 94.67 & 95.28 & {\bfseries\underline{96.00}} & 93.31 \\
			{} & {\scriptsize\textcolor{gray}{$\pm$5.76}} & {\scriptsize\textcolor{gray}{$\pm$4.37}} & {\scriptsize\textcolor{gray}{$\pm$3.11}} & {\scriptsize\textcolor{gray}{$\pm$9.11}} & {\scriptsize\textcolor{gray}{$\pm$2.77}} & {\scriptsize\textcolor{gray}{$\pm$3.35}} & {\scriptsize\textcolor{gray}{$\pm$2.28}} & {\scriptsize\textcolor{gray}{$\pm$2.21}} & {\scriptsize\textcolor{gray}{$\pm$1.73}} & {\scriptsize\textcolor{gray}{$\pm$1.14}} & {\scriptsize\textcolor{gray}{$\pm$2.20}} & {\scriptsize\textcolor{gray}{$\pm$2.47}} & {\scriptsize\textcolor{gray}{$\pm$4.20}} \\
			Yeast & 34.86 & 35.17 & 20.42 & 26.99 & \bfseries 45.35 & 42.84 & 39.15 & 40.31 & 34.08 & 43.64 & {\bfseries\underline{45.86}} & 41.95 & \bfseries 42.20 \\
			{} & {\scriptsize\textcolor{gray}{$\pm$5.69}} & {\scriptsize\textcolor{gray}{$\pm$3.96}} & {\scriptsize\textcolor{gray}{$\pm$4.62}} & {\scriptsize\textcolor{gray}{$\pm$3.18}} & {\scriptsize\textcolor{gray}{$\pm$3.92}} & {\scriptsize\textcolor{gray}{$\pm$3.24}} & {\scriptsize\textcolor{gray}{$\pm$1.86}} & {\scriptsize\textcolor{gray}{$\pm$3.57}} & {\scriptsize\textcolor{gray}{$\pm$3.72}} & {\scriptsize\textcolor{gray}{$\pm$1.87}} & {\scriptsize\textcolor{gray}{$\pm$1.46}} & {\scriptsize\textcolor{gray}{$\pm$1.91}} & {\scriptsize\textcolor{gray}{$\pm$2.53}} \\
			\midrule
			Avg Rank & 17.33 & 16.78 & 19.00 & 18.00 & \bfseries 6.89 & 7.56 & 9.78 & 9.33 & 15.33 & 6.00 & \bfseries 4.89 & {\bfseries\underline{4.33}} & 7.11 \\
			Avg AUAC & 54.91 & 56.33 & 50.63 & 54.47 & 69.38 & 68.71 & \bfseries 70.03 & 69.74 & 61.74 & 70.68 & \bfseries 70.98 & {\bfseries\underline{71.61}} & 71.11 \\
			\bottomrule
	\end{tabular}}
\end{table}
\paragraph{Tabular datasets.} Results on synthetic benchmarks appear in Table~\ref{tab:synthetic_auac_f1} and on real-world tabular datasets in Table~\ref{tab:tabular_auac_f1}. HyperEns achieves the best average rank and the highest mean AUAC-F1 on both benchmarking suites, confirming that per-subset specialisation via hypernetworks combined with ensembling results in consistent gains across diverse task scenarios. Moreover, even without ensembling \textsc{Hyper-DFS} performs on average better than the strongest single-model baselines (RePa, VIP) and reaches the second-best average AUAC-F1. Among single-model methods, RePa and VIP are positioned as the strongest competitors across both suites, with VIP achieving the highest average AUAC-F1 and RePa the best ranking within that group. In the multi-model approaches, Ensemble and MoE perform comparably to VIP, and substantially outperform CardinalDFS, whose poor performance confirms that static cardinality-based routing is not a good strategy to exploit shared structures between feature sets.

\begin{wraptable}{r}{0.5\linewidth} 
    \centering
    \caption{Zero-shot AUAC-F1 on held-out feature subsets (mean $\pm$ std over CV folds, \%).}
    \label{tab:zeroshot_auac_f1_small}
    \small 
    \begin{tabular}{lccc}
        \toprule
        Dataset & VIP & MoE & Hyper-DFS \\
        \midrule
        Avg Rank (Tab) & 2.67 & 1.78 & \textbf{1.56} \\
        Avg AUAC (Tab) & 46.55& 63.11 & \textbf{64.06} \\
        \midrule 
        Avg Rank (Img) & 2.00 & 3.00 & \textbf{1.00} \\
        Avg AUAC (Img) & 85.02 & 83.94 & \textbf{87.57} \\
        \bottomrule
    \end{tabular}
\end{wraptable}
Figure~\ref{fig:acquisition_curves} presents acquisition curves for three datasets chosen to illustrate the range of observed behaviours. On Diabetes, \textsc{Hyper-DFS} achieves the largest gains, outperforming all other methods. On Metabric, both \textsc{Hyper-DFS} versions maintain a persistent advantage over all competitors, with the gap being largest at low budgets where per-subset specialisation is most valuable. On Miniboone, the hypernetwork methods dominate at low budgets but are overtaken by RePa and MoE beyond eight features. In this case, subset specialisation through the hypernetwork performs better with fewer features, suggesting that the weight sharing mechanism of hypernetworks can be more effective at lower cardinalities. In Appendix~\ref{apx:additional_results}, Table \ref{tab:delta_auac_f1}, we confirm this, as \textsc{Hyper-DFS} has a similar performance improvement compared to other methods with respect to using a random selection policy.

\begin{table}[t]
	\centering
	\caption{AUAC-F1 on image patch benchmarks (mean $\pm$ std over CV folds, \%).}
	\label{tab:image_auac_auac_f1}
	\adjustbox{width=\linewidth}{
		\begin{tabular}{
				l
				*{4}{S[table-format=2.2, detect-weight=true]}
				|
				*{4}{S[table-format=2.2, detect-weight=true]}
				|
				*{2}{S[table-format=2.2, detect-weight=true]}
			}
			\toprule
			& \multicolumn{4}{c}{Single-model} & \multicolumn{4}{c}{Multi-model} & \multicolumn{2}{c}{Hypernetworks} \\
			\cmidrule(lr){2-5}\cmidrule(lr){6-9}\cmidrule(lr){10-11}
			{Dataset} & {DIME} & {RePa} & {SEFA} & {VIP} & {AACO} & {Card.} & {Ensemble} & {MoE} & {HyperEns} & {Hyper-DFS} \\
			\midrule
			FashionMNIST & 73.96 & 72.60 & 69.09 & \bfseries 80.80 & \bfseries 80.36 & 72.81 & 73.09 & 72.57 & 80.25 & {\bfseries\underline{81.19}} \\
			{} & {\scriptsize\textcolor{gray}{$\pm$1.35}} & {\scriptsize\textcolor{gray}{$\pm$1.74}} & {\scriptsize\textcolor{gray}{$\pm$2.52}} & {\scriptsize\textcolor{gray}{$\pm$0.50}} & {\scriptsize\textcolor{gray}{$\pm$0.50}} & {\scriptsize\textcolor{gray}{$\pm$1.77}} & {\scriptsize\textcolor{gray}{$\pm$1.75}} & {\scriptsize\textcolor{gray}{$\pm$0.53}} & {\scriptsize\textcolor{gray}{$\pm$0.58}} & {\scriptsize\textcolor{gray}{$\pm$0.25}} \\
			Imagenette & 94.05 & 93.90 & 94.14 & \bfseries 94.14 & 94.42 & 94.02 & 94.57 & \bfseries 94.57 & {\bfseries\underline{94.98}} & 94.89 \\
			{} & {\scriptsize\textcolor{gray}{$\pm$0.61}} & {\scriptsize\textcolor{gray}{$\pm$0.30}} & {\scriptsize\textcolor{gray}{$\pm$0.47}} & {\scriptsize\textcolor{gray}{$\pm$0.40}} & {\scriptsize\textcolor{gray}{$\pm$0.38}} & {\scriptsize\textcolor{gray}{$\pm$0.63}} & {\scriptsize\textcolor{gray}{$\pm$0.47}} & {\scriptsize\textcolor{gray}{$\pm$0.35}} & {\scriptsize\textcolor{gray}{$\pm$0.37}} & {\scriptsize\textcolor{gray}{$\pm$0.32}} \\
			MNIST & 84.37 & 83.77 & 77.10 & \bfseries 90.41 & \bfseries 90.04 & 83.66 & 84.52 & 81.56 & 90.44 & {\bfseries\underline{90.54}} \\
			{} & {\scriptsize\textcolor{gray}{$\pm$1.34}} & {\scriptsize\textcolor{gray}{$\pm$1.30}} & {\scriptsize\textcolor{gray}{$\pm$2.48}} & {\scriptsize\textcolor{gray}{$\pm$0.28}} & {\scriptsize\textcolor{gray}{$\pm$0.23}} & {\scriptsize\textcolor{gray}{$\pm$1.62}} & {\scriptsize\textcolor{gray}{$\pm$2.24}} & {\scriptsize\textcolor{gray}{$\pm$2.92}} & {\scriptsize\textcolor{gray}{$\pm$0.23}} & {\scriptsize\textcolor{gray}{$\pm$0.64}} \\
			PCam & 70.08 & 69.91 & \bfseries 71.13 & 68.58 & 68.98 & \bfseries 72.35 & 71.30 & 70.98 & 72.88 & {\bfseries\underline{73.39}} \\
			{} & {\scriptsize\textcolor{gray}{$\pm$1.12}} & {\scriptsize\textcolor{gray}{$\pm$1.77}} & {\scriptsize\textcolor{gray}{$\pm$0.77}} & {\scriptsize\textcolor{gray}{$\pm$3.03}} & {\scriptsize\textcolor{gray}{$\pm$0.50}} & {\scriptsize\textcolor{gray}{$\pm$0.43}} & {\scriptsize\textcolor{gray}{$\pm$0.59}} & {\scriptsize\textcolor{gray}{$\pm$0.75}} & {\scriptsize\textcolor{gray}{$\pm$0.25}} & {\scriptsize\textcolor{gray}{$\pm$0.71}} \\
			SVHN & 64.58 & 63.57 & 55.39 & {\bfseries\underline{69.99}} & \bfseries 64.31 & 63.81 & 63.36 & 62.13 & 69.05 & \bfseries 69.27 \\
			{} & {\scriptsize\textcolor{gray}{$\pm$3.19}} & {\scriptsize\textcolor{gray}{$\pm$0.40}} & {\scriptsize\textcolor{gray}{$\pm$1.30}} & {\scriptsize\textcolor{gray}{$\pm$0.98}} & {\scriptsize\textcolor{gray}{$\pm$0.13}} & {\scriptsize\textcolor{gray}{$\pm$1.88}} & {\scriptsize\textcolor{gray}{$\pm$3.45}} & {\scriptsize\textcolor{gray}{$\pm$1.94}} & {\scriptsize\textcolor{gray}{$\pm$0.37}} & {\scriptsize\textcolor{gray}{$\pm$1.49}} \\
			\bottomrule
		\end{tabular}
	}
\end{table}

\paragraph{Image datasets.} Table~\ref{tab:image_auac_auac_f1} reports AUAC-F1 on image patch benchmarks. For this comparison, we drop CWCF, EDDI and INVASE because of computational reasons. The \textsc{Hyper-DFS} variants achieved the highest score on four of five datasets, and the gap on SVHN to the best method (VIP) is of only 0.72 points. On PCam, \textsc{Hyper-DFS} outperforms all baselines, including dedicated multi-model approaches by over one point. Notably, conventional multi-model methods do not consistently improve over strong single-model baselines on these datasets, and HyperEns likewise does not consistently outperform single-model \textsc{Hyper-DFS}. Among multi-model baselines, AACO is competitive on FashionMNIST and MNIST but does not generalise this advantage to the remaining datasets. Overall, these results indicate that the benefits of per-subset specialisation extend beyond tabular data to spatially structured inputs like images, and highlight the limitations of standard multi-model approaches in improving over single-model baselines.

\begin{figure}[t]
    \centering
    \begin{subfigure}[b]{0.32\linewidth}
        \includegraphics[width=\linewidth]{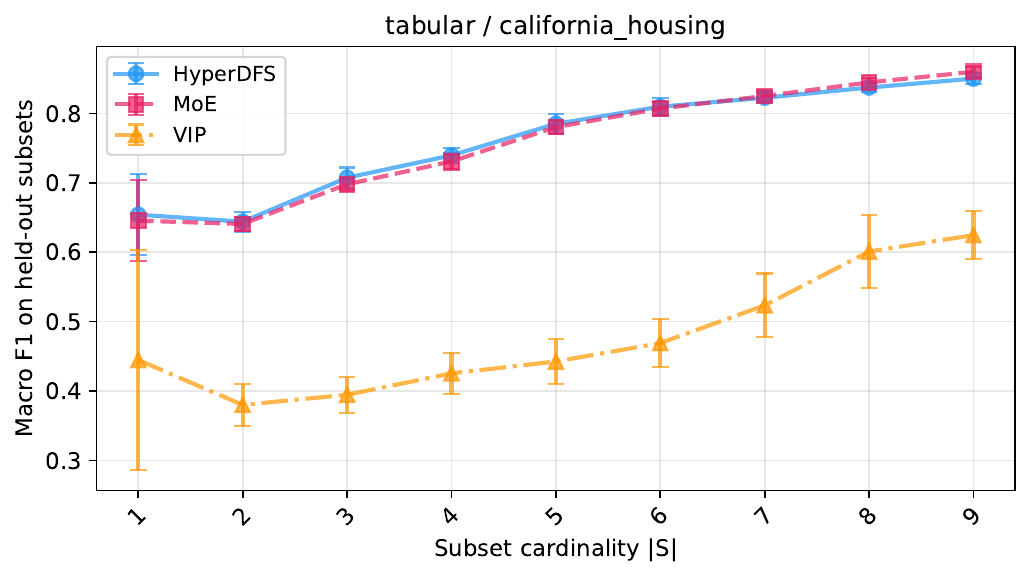}
    \end{subfigure}
    \hfill
    \begin{subfigure}[b]{0.32\linewidth}
        \includegraphics[width=\linewidth]{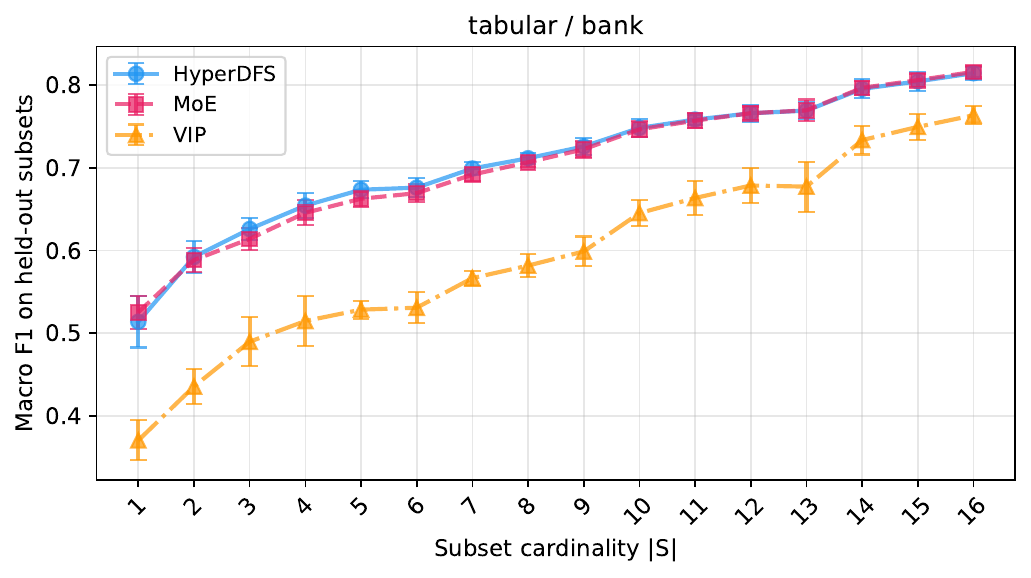}
    \end{subfigure}
    \hfill
    \begin{subfigure}[b]{0.32\linewidth}
        \includegraphics[width=\linewidth]{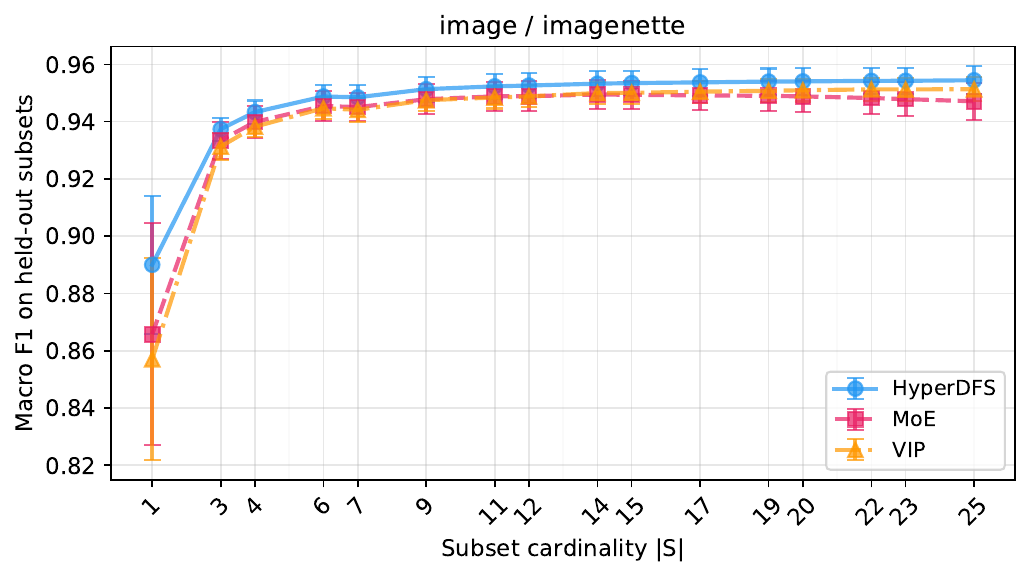}
    \end{subfigure}
    \caption{F1 score for unseen feature subsets in training, ordered according to the set cardinality. The three methods are shown: \textsc{Hyper-DFS}, MoE and VIP.}
    \label{fig:acquisition_curves_zeroshot}
\end{figure}

\paragraph{Zero-shot subset generalisation.}
Table~\ref{tab:zeroshot_auac_f1_small} reports zero-shot AUAC-F1 on held-out feature subsets across tabular and image benchmarks, comparing the strongest single-model baseline (VIP) and multi-model baseline (MoE) against \textsc{Hyper-DFS}. The results reveal a gap between single-model and multi-model approaches: VIP degrades substantially in the zero-shot setting on tabular data, while both MoE and \textsc{Hyper-DFS} surpass it by more than 15 points on average. Figures~\ref{fig:acquisition_curves_zeroshot}(a) and~\ref{fig:acquisition_curves_zeroshot}(b) illustrate two examples in tabular data in which VIP lags behind across all subset cardinalities, whereas MoE and \textsc{Hyper-DFS} remain competitive. On image datasets the relative ordering shifts: VIP recovers and outperforms MoE, yet \textsc{Hyper-DFS} consistently achieves the best performance. Figure~\ref{fig:acquisition_curves_zeroshot}(c) illustrates this pattern on Imagenette, where \textsc{Hyper-DFS} maintains a consistent advantage over both baselines across all subset cardinalities.

\section{Related Work}
\label{sec:related_work}

The idea of using multiple classifiers in a DFS setting has been partially explored in AACO~\citep{valancius2024acquisition}, which constructs a dictionary of precomputed subset-specific classifiers for lower subset cardinalities. However, this method still uses hundreds of classifiers plus the shared amortised predictor for a higher number of features. An approach to subset adaptation was proposed in~\citep{fumanal2025dynamic}, where a small reparametrisation of the predictor's final layers adjusts the model to each observed subset, following a similar principle to the hypernetwork parametrisation of~\citep{gonzalezortiz2024mip}. However, although it is computationally efficient, the expressiveness of such reparametrisation is inherently constrained: the adapted model remains anchored to the base parameters and cannot represent classifiers that require substantially different decision boundaries for different subsets.

Pattern-by-pattern approaches, where a model is trained for each missingness pattern, i.e., optimising Eq.~\eqref{eq:ideal_obj}, show significantly better results than amortised solutions in some cases \citep{Sell2024}. In \citet{lobo2025primer}, the authors also show that optimising Eq.~\eqref{eq:single_model_obj} might never reach the optimal solution for logistic regression models. Less computationally intensive pattern-by-pattern approaches, such as threshold non-relevant feature interactions \citep{Sell2024} or parameter-sharing schemes \citep{stempfle23}, have shown improved results in reducing the bias of amortised solutions while controlling the variance associated with pattern-by-pattern methods. However, these approaches are still limited in complex non-linear problems with a potentially exponential number of marginal distributions. Compared to our proposal, the hypernetwork captures shared structure across subsets, while still allowing enough flexibility to adapt to individual missingness patterns, providing a middle ground between amortised and expensive pattern-by-pattern approaches.

In the broader hypernetwork literature, there have been proposals to use the conditioning vector as a task~\citep{zhao2020meta} as a class specification \citep{przewikezlikowski2024hypermaml, chauhan2024hypernetworks} and even a support set of a dataset can be enough to generate weights for a model \citep{bonet2024hyperfast}. Hypernetworks have also been used for context-conditional feature selection \citep{sristicontextual}. These approaches have the advantage that the cost of generating many solutions is amortised through the hypernetwork \citep{beck2023recurrent}. This amortisation property is particularly relevant for DFS: even with unlimited memory to store all $2^M$ possible models, the vast majority of subsets would receive insufficient gradient signal during training to learn effective parameters. However, prior work in multi-task learning has also shown that multiple tasks might have cancelling gradients or training instabilities \citep{shigradient}, which existing literature on multi-task hypernetworks has addressed through pretrained components or by restricting the hypernetwork to low-rank adaptation~\citep{zhao2020meta, przewikezlikowski2024hypermaml}. In \textsc{Hyper-DFS}, we mitigate gradient interference instead through a controlled mask distribution per mini-batch combined with a learning-rate warmup schedule, which avoids both component-wise training stages and capacity restrictions on the hypernetwork.

\section{Conclusions and Limitations}
In this paper we argued that per-subset specialisation is a structural limitation of existing dynamic feature selection, as no shared predictor can simultaneously be optimal across an exponential family of feature subsets. We also showed that existing mask-concatenation approaches are instances of embedding-based architectures and inherit the modularity penalty of \citet{galanti2020modularity}. We then proposed \textsc{Hyper-DFS}, a framework in which a Set Transformer encodes feature subsets into a continuous representation and a hypernetwork generates a dedicated primary network on demand. The continuous encoding produces a smooth conditioning space in which functionally similar observation masks result in functionally similar primary networks, something that the pure binary-mask formulation cannot do. We empirically support our results across synthetic, tabular, and image benchmarks, where \textsc{Hyper-DFS} systematically surpasses state-of-the-art DFS baselines, and multi-model baselines like ensembles and MoE.

Our contribution has limitations as well. The same smoothness that enables generalisation also limits diversity: hyperensembling over perturbed subset encodings produced no measurable gain. Training a hypernetwork is more expensive than training a single primary network of comparable capacity. A tighter theoretical account of how the per-subset gap distributes across subsets, in both \textsc{Hyper-DFS} and embedding-based methods, would help understand the performance ceiling of each method.
\section{Acknowledgement}

This research and Javier Fumanal-Idocin were supported by EU Horizon Europe under the Marie Skłodowska-Curie COFUND grant No 101081327 YUFE4Postdocs. Raquel Fernandez-Peralta is funded by the EU NextGenerationEU through the Recovery and Resilience Plan for Slovakia under the project No. 09I03-03-V04- 00557.

The authors acknowledge the use of the High Performance Computing Facility (Ceres) and its associated support services at the University of Essex in the completion of this work.

\bibliography{neurips_2024}

\clearpage
\appendix


\section*{Appendix Overview}

	The appendix is organised into two parts:
    
	\paragraph{Further Analysis of the primary networks weights in \textsc{Hyper-DFS}.} These appendices cover the variance and gradient analysis of the primary network. 
    \begin{itemize}
        \item Appendix~\ref{apx:weight_variance} analyses the weight variance in the hypernetwork, and why using both absence and presence embeddings mitigates variance issues.
        \item Appendix \ref{app:hyperensembling} discusses loss enhancement to prevent representational collapse in the hypernetwork. 
        \item Appendix \ref{apx:grad_dilution} studies the problem of gradient dilution and its solutions.
    \end{itemize}
	 
	\textbf{Additional Empirical Results and Reproducibility.} Includes additional experimental details.
    \begin{itemize}
        \item Appendix~\ref{apx:additional_results} provides the full results for the experiments discussed in the main body of the paper.
        \item Appendix~\ref{apx:implementation} provides full implementation details, including dataset preprocessing, model architectures, hyperparameter ranges, and training configurations.
    \end{itemize}
	
	\bigskip

\section{Controlling Weight Variance in \textsc{Hyper-DFS}} \label{apx:weight_variance}

In this appendix, we study how to control the weight variance through the hypernetwork training. For that, we propose two things: to control the norm of the hypernetwork input, and to use a scale regularisation loss. Both decisions are described in detail in the following. 

\subsection{The Subset-Size Bias Under Sum Aggregation in Set Transformer Encoding}
Training hypernetworks is known to be harder than training standard networks of comparable size~\citep{chang2020principled}, because the gradient signal reaching $\phi$ is mediated by the primary network:
\begin{equation}
    \nabla_\phi \mathcal{L}
    \;=\;
    J_\phi^\top\, \nabla_\theta \mathcal{L},
    \qquad
    J_\phi
    \;=\;
    \frac{\partial g_\phi(\tilde{\mathbf{z}}_{\
    omega,\rho}(\mathbf{I}_S))}{\partial \phi}
    \;\in\; \mathbb{R}^{|\theta| \times |\phi|},
\end{equation}
where $J_\phi$ is the Jacobian of the weight-generation function with respect to the hypernetwork parameters. If the element-wise variance of $\theta_S = g_\phi(\tilde{\mathbf{z}}_{\omega,\rho}(\mathbf{I}_S))$ deviates substantially from the Xavier target ($\mathrm{Var}(W_{ij}) = 1/d_{\text{in}}$, where $d_{\text{in}}$ is the input dimension of the corresponding layer), the primary network's pre-activations either vanish or saturate, suppressing $\nabla_\theta \mathcal{L}$, and the gradient reaching $\phi$~\citep{glorot2010understanding}. Existing work~\citep{chang2020principled} addresses this by deriving conditions on $g_\phi$'s weight matrices so that the marginal distribution of the generated weights matches this target. However, their analysis treats the conditioning input distribution as fixed. This is problematic in a DFS setting, where the conditioning signal has systematically different norms depending on the cardinality of the knowledge status. Two complementary mechanisms can mitigate this problem: one is to normalise $\mathbf{z}_{\omega, \rho}(\mathbf{I}_S)$ (Eq.~\eqref{eq:subset_encoding}), the other is to decouple its norm from $|S|$, which we achieve using absence and presence embeddings for each feature. In the following, we formalise how this norm and cardinality relationship affects the variance of the weights of the primary network and how our solutions are mitigating this problem.

\begin{proposition}[Cardinality-dependent weight scale under different encoder designs]
\label{prop:hypernet_scale}
Let $\mathbf{u}_{\omega,\rho}\colon \{0,1\}^M \to \mathbb{R}^{d'}$ be a presence-only token-wise sum-pooled set encoder defined by
\begin{equation*}
    \mathbf{u}_{\omega,\rho}(\mathbf{I}_S) \;=\; f_\rho\!\left(\sum_{i=1}^{M} (\mathbf{I}_S)_i\, f_\omega(\mathbf{e}^1_i)\right) \;=\; f_\rho\!\left(\sum_{i \in S} f_\omega(\mathbf{e}^1_i)\right),
\end{equation*}
and let $g_\phi\colon \mathbb{R}^{d'} \to \mathbb{R}^{|\theta|}$ be a hypernetwork. We assume $f_\omega$, $f_\rho$, and $g_\phi$ are linear at initialisation, in the spirit of variance-propagation analyses~\citep{glorot2010understanding}. Assume the feature embedding pairs $\{(\mathbf{e}^1_i, \mathbf{e}^0_i)\}_{i=1}^M$ are independent across $i$, with each embedding zero-mean, per-coordinate variance $\sigma_e^2$, and finite fourth moments. Then:
\begin{enumerate}
\item[(i)] \textbf{Presence-only encoder, no normalisation.} Let $\theta_S = g_\phi(\mathbf{u}_{\omega,\rho}(\mathbf{I}_S))$. Then
\begin{equation}
  \mathbb{E}\,\|\theta_S\|_2^2 \;=\; |S|\cdot \kappa,
\end{equation}
where $\kappa > 0$ depends on the initialisation scales of $f_\omega$, $f_\rho$, $g_\phi$, and on $\sigma_e^2$, but does not itself depend on $|S|$. The squared norm of the generated weights grows linearly with the number of observed features.

\item[(ii)] \textbf{Presence-only encoder, $L_2$ normalisation.} Let $\tilde{\mathbf{u}}_{\omega,\rho}(\mathbf{I}_S) = \mathbf{u}_{\omega,\rho}(\mathbf{I}_S) / \|\mathbf{u}_{\omega,\rho}(\mathbf{I}_S)\|_2$ and $\theta_S = g_\phi(\tilde{\mathbf{u}}_{\omega,\rho}(\mathbf{I}_S))$. Then
\begin{equation}
  \mathbb{E}\,\|\theta_S\|_2^2 \;=\; \kappa',
\end{equation}
for a constant $\kappa' > 0$ independent of $|S|$.

\item[(iii)] \textbf{Encoder with absence embeddings.} Let $\mathbf{z}_{\omega,\rho}\colon \{0,1\}^M \to \mathbb{R}^{d'}$ be the encoder used in \textsc{Hyper-DFS} (Eq.~\eqref{eq:subset_encoding}),
\begin{equation*}
    \mathbf{z}_{\omega,\rho}(\mathbf{I}_S) \;=\; f_\rho\!\left(\sum_{i=1}^{M} f_\omega(\mathbf{t}_i)\right),
    \qquad
    \mathbf{t}_i = \mathbf{I}_{S,i}\,\mathbf{e}^1_i + \bigl(1 - \mathbf{I}_{S,i}\bigr)\,\mathbf{e}^0_i,
\end{equation*}
and let $\theta_S = g_\phi(\mathbf{z}_{\omega,\rho}(\mathbf{I}_S))$. Then
\begin{equation}
  \mathbb{E}\,\|\theta_S\|_2^2 \;=\; M \cdot \kappa'',
\end{equation}
for a constant $\kappa'' > 0$ independent of $|S|$.
\end{enumerate}
\end{proposition}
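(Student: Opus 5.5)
Since $f_\omega$, $f_\rho$ and $g_\phi$ are linear at initialisation, I would write them as matrices $W_\omega \in \mathbb{R}^{d\times d}$, $W_\rho \in \mathbb{R}^{d'\times d}$, $W_\phi \in \mathbb{R}^{|\theta|\times d'}$, dropping biases (or assuming they are initialised to zero). Let $A = W_\phi W_\rho W_\omega$. Then the whole map from pooled embeddings to generated weights is linear, and $\theta_S = A\,\mathbf{s}$ with $\mathbf{s}$ the pooled embedding sum. I treat the weight matrices as fixed, or as drawn independently of the embeddings. In the latter case I condition on them first and take the outer expectation at the end, so that $\kappa$ is $\mathbb{E}$ over initialisations.

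For (i), $\mathbf{s} = \sum_{i\in S}\mathbf{e}^1_i$. By independence and zero mean, the cross terms vanish: $\mathbb{E}[\mathbf{e}^{1\top}_i A^\top A\,\mathbf{e}^1_j]=0$ for $i\neq j$. Each diagonal term equals $\operatorname{tr}(A^\top A\,\Sigma_e)$. Taking $\Sigma_e=\sigma_e^2 I$ (per-coordinate variance; I would state the needed isotropy or identical covariance explicitly), this gives
\[
\mathbb{E}\|\theta_S\|_2^2 = |S|\,\sigma_e^2\,\|A\|_F^2 ,
\]
so $\kappa=\sigma_e^2\,\mathbb{E}\|A\|_F^2$. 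For (iii), the same computation applies to $\mathbf{s}=\sum_{i=1}^M \mathbf{t}_i$. Each $\mathbf{t}_i$ is either $\mathbf{e}^1_i$ or $\mathbf{e}^0_i$, deterministically given $S$. The $\mathbf{t}_i$ are independent across $i$, because the pairs are, and each is zero mean with covariance $\sigma_e^2 I$. So there are always exactly $M$ diagonal terms, and $\kappa''=\kappa$ with the sum running over $M$ rather than $|S|$ terms. Here independence is only needed across $i$; the correlation between $\mathbf{e}^1_i$ and $\mathbf{e}^0_i$ never enters, since only one of them appears.

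For (ii), $\theta_S = A\,\mathbf{u}/\|\mathbf{u}\|$ with $\mathbf{u}=W_\rho W_\omega\mathbf{s}$, and since $A=W_\phi W_\rho W_\omega$ this is $W_\phi\,\mathbf{u}/\|\mathbf{u}\|$. Then $\|\theta_S\|^2 = \hat{\mathbf{u}}^\top W_\phi^\top W_\phi\hat{\mathbf{u}}$ with $\hat{\mathbf{u}}$ a unit vector. This is the main obstacle: the value depends on the direction of $\hat{\mathbf{u}}$, whose distribution could in principle vary with $|S|$. I would handle it in one of two ways.

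\begin{enumerate}
\item The first route uses the randomness of $W_\phi$. If $W_\phi$ has i.i.d.\ zero-mean entries of variance $\sigma_\phi^2$, independent of everything else, then conditionally on $\hat{\mathbf{u}}$ we get $\mathbb{E}\|W_\phi\hat{\mathbf{u}}\|^2 = |\theta|\,\sigma_\phi^2\|\hat{\mathbf{u}}\|^2 = |\theta|\sigma_\phi^2$. This is exactly constant, giving $\kappa'=|\theta|\sigma_\phi^2$ with no use of the embedding distribution at all.
\item If $W_\phi$ is instead treated as fixed, one can still bound $\|\theta_S\|^2$ between $\sigma_{\min}^2$ and $\sigma_{\max}^2$ of $W_\phi$, uniformly in $|S|$. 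Exact constancy would then require rotational invariance of $\mathbf{s}$, for example Gaussian isotropic embeddings. In that case $\hat{\mathbf{s}}$ is uniform on the sphere for every $|S|\ge 1$, so the law of $\hat{\mathbf{u}}$ does not depend on $|S|$.
\end{enumerate}

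I would present the first route as the proof and note the second as a remark. I would also exclude the null event $\mathbf{u}=0$ and the case $S=\emptyset$, which has probability zero for $|S|\ge1$ under continuous embeddings. The finite-fourth-moment assumption is then only used to guarantee the expectations are finite, or to add concentration remarks showing $\|\mathbf{u}\|^2/|S|$ concentrates around its mean.
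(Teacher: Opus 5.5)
Your proposal is correct. For (i) and (iii) it shares the paper's skeleton: cross terms vanish by independence and zero mean, and you count summands ($|S|$ in (i), always $M$ in (iii)).

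The difference is that the paper never writes down the composite map. It tracks only the second moment of the pooled vector and asserts that each linear map ``preserves the squared-norm scaling up to a multiplicative constant independent of $|S|$''. For (ii) it concludes directly from $\|\tilde{\mathbf u}\|_2=1$.

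Your formula $\mathbb{E}\|A\mathbf s\|_2^2=\operatorname{tr}(A^\top A\,\mathrm{Cov}(\mathbf s))$ exposes what that assertion needs. With fixed weights, (i) and (iii) require equal embedding covariances; equal per-coordinate variances alone give neither $|S|$-proportionality nor $c_0=c_1$. In (ii), $\mathbb{E}\|W_\phi\hat{\mathbf u}\|_2^2=\operatorname{tr}(W_\phi^\top W_\phi\,\mathbb{E}[\hat{\mathbf u}\hat{\mathbf u}^\top])$ depends on the direction law of $\hat{\mathbf u}$, which can change with $|S|$ for non-Gaussian embeddings. So the paper's step for (ii) is incomplete as written.

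Your route 1 supplies exactly the missing justification: condition on $\hat{\mathbf u}$, average over an independent i.i.d.\ $W_\phi$, and obtain $\kappa'=|\theta|\sigma_\phi^2$. In route 2, rotational invariance is sufficient but not necessary. It is enough that $\mathbb{E}[\hat{\mathbf u}\hat{\mathbf u}^\top]$ not depend on $|S|$, e.g.\ Gaussian embeddings with a common covariance.

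One refinement: since route 1 already assumes random i.i.d.\ initialisation, use it in (i) and (iii) as well. Take independent zero-mean i.i.d.\ factors with entry variances $\sigma_\phi^2,\sigma_\rho^2,\sigma_\omega^2$. Then $\mathbb{E}[A^\top A]=|\theta|\,d'\,d\,\sigma_\phi^2\sigma_\rho^2\sigma_\omega^2\,I_d$, so each token enters only through $\operatorname{tr}\mathrm{Cov}(\mathbf t_i)=d\sigma_e^2$. The isotropy assumption then becomes unnecessary, and all three parts follow from the hypotheses exactly as stated.

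The paper's route buys brevity. Yours buys explicit constants and an honest account of what the exact equalities rest on: zero biases, $\mathbf u\neq\mathbf 0$ almost surely, and either random $W_\phi$ or an $|S|$-independent direction law.
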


\begin{proof}
Since $f_\omega$, $f_\rho$, and $g_\phi$ are linear at initialisation, $\theta_S$ is a linear function of the embeddings, and each linear map preserves the squared-norm scaling up to a multiplicative constant independent of $|S|$. It therefore suffices to track $\mathbb{E}\|\mathbf{u}_{\omega,\rho}(\mathbf{I}_S)\|_2^2$ in part (i), $\mathbb{E}\|\tilde{\mathbf{u}}_{\omega,\rho}(\mathbf{I}_S)\|_2^2$ in part (ii), and $\mathbb{E}\|\mathbf{z}_{\omega,\rho}(\mathbf{I}_S)\|_2^2$ in part (iii).

\emph{Part (i).} Since $f_\omega$ is linear and the $\{\mathbf{e}^1_i\}_{i \in S}$ are independent and zero-mean, so are the summands $\{f_\omega(\mathbf{e}^1_i)\}_{i \in S}$. The cross terms in the expansion of the squared norm therefore vanish, giving
\begin{equation}
    \mathbb{E}\!\left\|\sum_{i \in S} f_\omega(\mathbf{e}^1_i)\right\|_2^2
    \;=\; \sum_{i \in S} \mathbb{E}\|f_\omega(\mathbf{e}^1_i)\|_2^2
    \;=\; |S| \cdot c_1,
\end{equation}
with $c_1 = \mathbb{E}\|f_\omega(\mathbf{e}^1_1)\|_2^2$. Applying the linear $f_\rho$ and $g_\phi$ preserves the $|S|$-scaling up to a constant, giving $\mathbb{E}\|\theta_S\|_2^2 = |S| \cdot \kappa$.

\emph{Part (ii).} By construction, $\|\tilde{\mathbf{u}}_{\omega,\rho}(\mathbf{I}_S)\|_2 = 1$, so $\mathbb{E}\|\tilde{\mathbf{u}}_{\omega,\rho}(\mathbf{I}_S)\|_2^2 = 1$ independently from $|S|$. Hence $\mathbb{E}\|\theta_S\|_2^2 = \kappa'$ for some constant $\kappa'$ independent of $|S|$.

\emph{Part (iii).} The token sum has $M$ summands, with $\mathbf{t}_i = \mathbf{e}^1_i$ for $i \in S$ and $\mathbf{t}_i = \mathbf{e}^0_i$ for $i \in \bar{S}$. Independence and zero-mean of the embedding pairs across $i$ give
\begin{equation} \label{eq:absence_proof}
    \mathbb{E}\!\left\|\sum_{i=1}^M f_\omega(\mathbf{t}_i)\right\|_2^2
    \;=\; \sum_{i \in S} \mathbb{E}\|f_\omega(\mathbf{e}^1_i)\|_2^2 + \sum_{i \in \bar{S}} \mathbb{E}\|f_\omega(\mathbf{e}^0_i)\|_2^2
    \;=\; |S| \cdot c_1 + (M - |S|) \cdot c_0,
\end{equation}
with $c_0 = \mathbb{E}\|f_\omega(\mathbf{e}^0_1)\|_2^2$. Since $f_\omega$ is linear and both embedding distributions share the same per-coordinate variance $\sigma_e^2$, we have $c_0 = c_1$, and the sum reduces to $M \cdot c_1$, independent of $|S|$. Applying the linear $f_\rho$ and $g_\phi$ preserves this scaling, giving $\mathbb{E}\|\theta_S\|_2^2 = M \cdot \kappa''$.
\end{proof}

\paragraph{Remark.} The cardinality independence in part~(iii) depends on $c_0 = c_1$, which follows from our assumption that the absence and presence embeddings share the same per-coordinate variance. This holds at initialisation by construction, but as training progresses the embedding distributions can drift apart, and the dependence on $|S|$ may reappear. Part~(ii), in contrast, gives a guarantee that does not rely on this matching condition. This is why \textsc{Hyper-DFS} combines absence embeddings with $L_2$ normalisation: the absence embeddings provide a uniform $M$-token representation, while the $L_2$ normalisation ensures that the scale of $\theta_S$ stays under control even if the embedding distributions diverge.

\subsection{Output-Scale Regularisation}

To encourage variance uniformity throughout training, we augment the objective with a penalty on the scale of the generated weights:
\begin{equation}
    \mathcal{L}_{\mathrm{scale}}(\phi, \omega, \rho)
    \;=\;
    \lambda \cdot \sum_{l=1}^{L}
    \mathbb{E}_S\!\left[
        \left(
            \frac{1}{n_l}
            \bigl\|g_\phi^{(l)}\bigl(\tilde{\mathbf{z}}_{\omega,\rho}(\mathbf{I}_S)\bigr)\bigr\|_2^2
            -\sigma_{l,\star}^2
        \right)^{\!2}
    \right],
    \label{eq:scale_reg}
\end{equation}
where $n_l$ is the number of parameters in layer $l$ of the primary network, $g_\phi^{(l)}(\cdot) \in \mathbb{R}^{n_l}$ is the slice of the hypernetwork output that generates them, and $\sigma_{l,\star}^2$ is the Xavier target variance for that layer. $L$ is the number of layers in the primary network, and the expectation over $S$ is approximated by the mini-batch of subsets sampled alongside the training data. The coefficient $\lambda > 0$ is annealed to zero after an initial warm-up phase; its primary role is to stabilise the early training trajectory. $\mathcal{L}_{\mathrm{scale}}$ is fully differentiable and adds negligible overhead to the main loss computation.

\section{Stabilizing the training of \textsc{Hyper-DFS} while preventing representation collapse}
\label{app:hyperensembling}
This appendix details the two auxiliary regularisers that complement the main cross-entropy objective in our training procedure: noise injection applied to the subset encoding, and a VICReg-style variance penalty that prevents either the encoder or the hypernetwork from degenerating into a constant map.

\subsection{Stochasticity in \textsc{Hyper-DFS} training} \label{sec:stochastic_ensemble}
Stochasticity can be introduced into hypernetworks in several ways. It is common practice to either perturb the conditioning input \citep{krueger2017bayesian} or generate the primary-network weights directly from noise. In \textsc{Hyper-DFS}, we inject noise into the hypernetwork's input, in $\mathbf{z}$-space:
\begin{equation}
    \mathbf{z}_{\omega,\rho}^*(\mathbf{I}_S) =
    \mathbf{z}_{\omega,\rho}(\mathbf{I}_S) + \boldsymbol{\varepsilon},
    \quad \boldsymbol{\varepsilon} \sim \mathcal{N}(\mathbf{0},\, \sigma^2 \mathbf{I}_{d'}),
    \qquad
    \tilde{\mathbf{z}}_{\omega,\rho}^*(\mathbf{I}_S) =
    \frac{\mathbf{z}_{\omega,\rho}^*(\mathbf{I}_S)}
         {\|\mathbf{z}_{\omega,\rho}^*(\mathbf{I}_S)\|_2},
    \label{eq:stochastic_encoding}
\end{equation}
and pass $\tilde{\mathbf{z}}_{\omega,\rho}^*(\mathbf{I}_S)$ to the hypernetwork in place of $\tilde{\mathbf{z}}_{\omega,\rho}(\mathbf{I}_S)$ during training. This encourages the hypernetwork to vary smoothly as a function of $\mathbf{z}_{\omega,\rho}(\mathbf{I}_S)$, aligning representation-space proximity with functional similarity. It also penalises the Jacobian norm of $g_\phi$ with respect to its conditioning input~\citep{bishop1995training, rifai2011contractive}, which discourages large weight changes in response to small encoding perturbations.

\subsection{Preventing Representation Collapse}
A known failure mode of hypernetworks is \emph{representation collapse}: the hypernetwork may learn to produce nearly identical parameters regardless of the conditioning input, rendering the conditioning redundant. In our setting, collapse can occur at two levels: the encoder may map multiple, or even all, functionally distinct subsets to the same encoding, or the hypernetwork may map distinct encodings to the same weight configuration for the primary network.

We address both modes with a low-variance penalty applied to both the intermediate representations and the generated weights:
\begin{equation}
    \mathcal{L}_\mathrm{collapse}
    \;=\;
    -\frac{1}{d'} \sum_{j=1}^{d'}
    \mathrm{Var}\!\left[
        \bigl(\tilde{\mathbf{z}}_{\omega, \rho}(\mathbf{I}_{S})\bigr)_j
    \right]
    \;-\;
    \frac{1}{|\theta|} \sum_{k=1}^{|\theta|}
    \mathrm{Var}\!\left[
        \bigl(g_\phi(\tilde{\mathbf{z}}_{\omega,\rho}(\mathbf{I}_S))\bigr)_k
    \right],
    \label{eq:collapse_loss}
\end{equation}
where $\mathrm{Var}[\cdot]$ denotes the empirical variance computed across the mini-batch.

Note that this loss does not involve predictions, so an uninformative feature may legitimately not change the model's output, but it should still shift $\tilde{\mathbf{z}}_{\omega,\rho}(\mathbf{I}_S)$ and, in turn, $g_\phi(\tilde{\mathbf{z}}_{\omega,\rho}(\mathbf{I}_S))$. Both variances are computed over the same mini-batch already used for the cross-entropy step, so $\mathcal{L}_\mathrm{collapse}$ adds no additional forward passes. This results in the following training
  \begin{equation}
      \mathcal{L}
      = \mathcal{L}_\text{CE}+\lambda_\text{scale}(t)\mathcal{L}_\text{scale}+\lambda_\text{collapse}\mathcal{L}_\text{collapse},
      \label{eq:total_loss}                                        
  \end{equation}
where $\lambda_{\mathrm{scale}}(t)$ is the annealed warm-up coefficient (Eq.~\eqref{eq:scale_reg}) and $\lambda_\text{collapse} = 0.01$ is fixed throughout training.

\section{Gradient Dilution in \textsc{Hyper-DFS} and the Pre-training
Stabilisation Strategy}
\label{apx:grad_dilution}

This appendix develops the training-stability analysis summarised in Section~\ref{sec:grad_dilution}. Conditioning a hypernetwork on different feature subsets within a single mini-batch can induce \emph{gradient dilution}: subset-specific contributions to the backward pass interfere destructively. The effect appears at the hypernetwork parameters $\phi$ and is amplified at the input compressor $c_\eta$. We motivate below the two remedies used during predictor training: a per-batch mask budget and a linear learning-rate warm-up.

\paragraph{Gradient dilution at $\phi$ and $\eta$.}
Consider a mini-batch of size $B$ in which $K$ distinct feature subsets $\{S_1, \ldots, S_K\}$ appear, with $\mathcal{B}_k \subseteq \{1, \ldots, B\}$ collecting the samples that share subset $S_k$. The gradient at the hypernetwork parameters decomposes as
\begin{equation}
    \nabla_\phi \mathcal{L}
    \;=\;
    \frac{1}{B} \sum_{k=1}^{K}
    J_\phi\!\bigl(\tilde{\mathbf{z}}_{\omega,\rho}(\mathbf{I}_{S_k})\bigr)^{\!\top}
    \sum_{b \in \mathcal{B}_k}
    \nabla_\theta \mathcal{L}^{(b)}\,\bigr|_{\theta = \theta_{S_k}},
    \label{eq:grad_phi_apx}
\end{equation}
where $\theta_{S_k} = g_\phi(\tilde{\mathbf{z}}_{\omega,\rho}(\mathbf{I}_{S_k}))$ are the weights generated for subset $S_k$, $J_\phi(\mathbf{z}) = \partial g_\phi(\mathbf{z}) / \partial \phi \in \mathbb{R}^{|\theta| \times |\phi|}$ is the parameter-Jacobian of the weight-generation function, and $\mathcal{L}^{(b)}$ is the loss on the $b$-th sample. Samples sharing a subset accumulate constructively through a common Jacobian. However, $g_\phi$ at initialisation behaves as a near-random function, so the Jacobians for distinct $k$ act as approximately uncorrelated linear maps. Their contributions therefore rarely share a coherent direction, reducing the effective signal in $\nabla_\phi \mathcal{L}$~\citep{yu2020gradient}.

The compressor is mask-independent in its forward pass, but its backward signal flows through the generated, mask-dependent primary network:
\begin{equation}
    \nabla_\eta \mathcal{L}
    \;=\;
    \frac{1}{B} \sum_{b=1}^{B}
    J_\eta\!\bigl(\tilde{\mathbf{x}}^{(b)}_{S^{(b)}}\bigr)^{\!\top}
    J_{f,\theta}\!\bigl(\mathbf{h}^{(b)}\bigr)^{\!\top}
    \nabla_{\hat{y}^{(b)}_\theta} \mathcal{L}^{(b)}
    \,\bigr|_{\theta = \theta^{(b)}},
    \label{eq:grad_compressor_apx}
\end{equation}
where $S^{(b)}$ is the subset assigned to the $b$-th sample, $\theta^{(b)} = g_\phi(\tilde{\mathbf{z}}_{\omega,\rho}(\mathbf{I}_{S^{(b)}}))$ is the per-sample generated weight tensor, $\mathbf{h}^{(b)} = c_\eta(\tilde{\mathbf{x}}^{(b)}_{S^{(b)}})$ is the compressed representation, $\hat{y}^{(b)}_\theta = f_\theta(\mathbf{h}^{(b)})$ is the prediction at weights $\theta$, $J_\eta(\mathbf{x}) = \partial c_\eta(\mathbf{x}) / \partial \eta$ is the parameter-Jacobian of the compressor, and $J_{f,\theta}(\mathbf{h}) = \partial f_\theta(\mathbf{h}) / \partial \mathbf{h}$ is the input-Jacobian of the primary network at weights $\theta$. The bar fixes both $J_{f,\theta}$ and $\hat{y}^{(b)}_\theta$ at $\theta = \theta^{(b)}$. At initialisation, distinct subsets produce approximately independent weight configurations, and the corresponding input-Jacobians act as approximately uncorrelated linear maps. This is analogous to training a shared feature extractor behind randomly initialised, rapidly changing projection heads, a scenario prone to representation collapse~\citep{yu2020gradient, chen2021exploring}.

\paragraph{A per-batch mask budget.}
Restricting each mini-batch to a single subset ($K = 1$, with $S^{(b)} = S$ for all $b$) gives $\theta^{(b)} = \theta_S$ for all $b$, and Eq.~\eqref{eq:grad_compressor_apx} simplifies to
\begin{equation}
    \nabla_\eta \mathcal{L}
    \;=\;
    \frac{1}{B} \sum_{b=1}^{B}
    J_\eta\!\bigl(\tilde{\mathbf{x}}^{(b)}_S\bigr)^{\!\top}
    J_{f,\theta}\!\bigl(\mathbf{h}^{(b)}\bigr)^{\!\top}
    \nabla_{\hat{y}^{(b)}_\theta} \mathcal{L}^{(b)}
    \,\bigr|_{\theta = \theta_S}.
    \label{eq:grad_compressor_K1}
\end{equation}
Every per-sample contribution is now routed through the same primary network, recovering the standard situation of a feature extractor trained jointly with a single downstream classifier; the same simplification applies to Eq.~\eqref{eq:grad_phi_apx}, where the outer sum collapses to the single term $k=1$ and the bar fixes $\theta = \theta_S$. A strict budget of $K = 1$ removes the within-batch subset diversity needed by the variance penalty $\mathcal{L}_{\mathrm{collapse}}$ (Eq.~\eqref{eq:collapse_loss}) and forfeits the cooperation and generalisation effects of intra-batch diversity, so we adopt a small but non-trivial $K$ in practice.

\paragraph{Linear learning-rate warm-up.}
Even with a small $K$, a transient instability persists. The scale regulariser $\mathcal{L}_{\mathrm{scale}}$ (Eq.~\eqref{eq:scale_reg}) requires several optimiser steps to bring the empirical layerwise mean squared values $\|g_\phi^{(l)}(\tilde{\mathbf{z}}_{\omega,\rho}(\mathbf{I}_S))\|_2^2 / n_l$ towards their Xavier targets $\sigma_{l,\star}^2$. During this transient, a large optimiser step can push $\theta_S$ into a saturated activation regime, suppressing $J_{f,\theta_S}(\mathbf{h})$ and the compressor gradient itself. Linearly warming up the global learning rate over the first $T_{\mathrm{warm}}$ steps~\citep{goyal2017accurate} keeps these early updates small, allowing $g_\phi$ to reach the regime in which $\mathcal{L}_{\mathrm{scale}}$ takes effect before the optimiser commits to a particular mapping $\tilde{\mathbf{z}}_{\omega,\rho}(\mathbf{I}_S) \mapsto \theta_S$.

\section{Additional Experimental Results}
\label{apx:additional_results}

In this appendix we cover additional experimental results. We report the results of our ablation studies, extended reports of the results reported in the main body of the paper, and we report the running times of the different methods tested.

\subsection{Ablation Studies}

  \label{app:ablations}                                     

  To isolate the contribution of each design choice in \textsc{Hyper-DFS}, we conduct a series of four controlled ablations, varying one component at a time while holding all others at their default configuration.
\begin{figure}[h]
    \centering
    \begin{subfigure}[t]{0.48\linewidth}
        \centering
        \includegraphics[width=\linewidth]{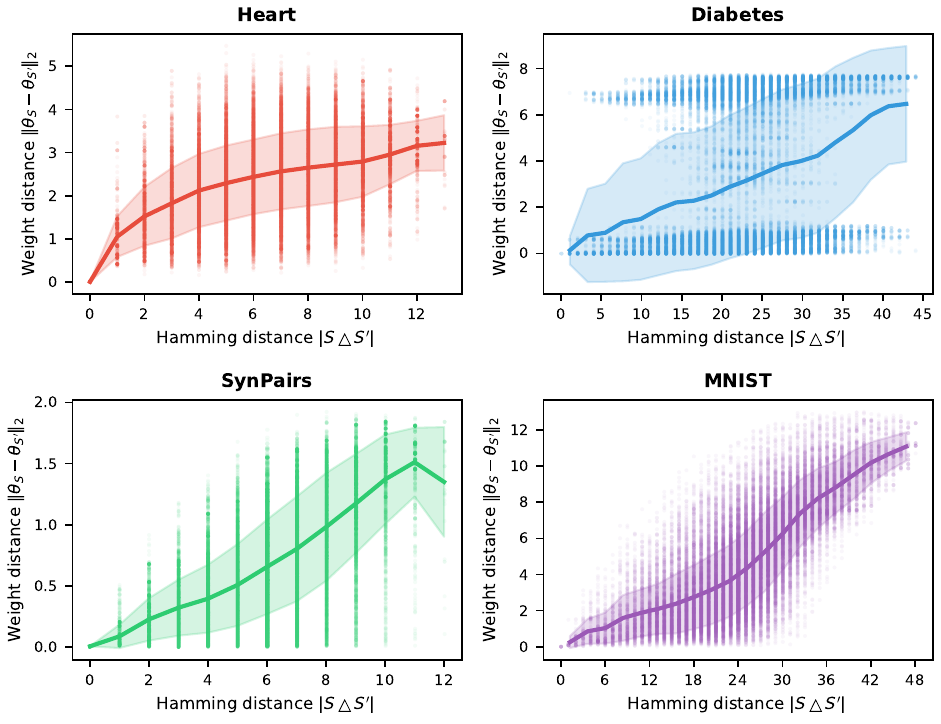}
        \caption{Hamming distance in the original space versus primary network weight distance.}
        \label{fig:weight_vs_hamming_a}
    \end{subfigure}
    \hspace{0.01\linewidth}
    \vrule width 0.3pt
    \hspace{0.01\linewidth}
    \begin{subfigure}[t]{0.48\linewidth}
        \centering
        \includegraphics[width=\linewidth]{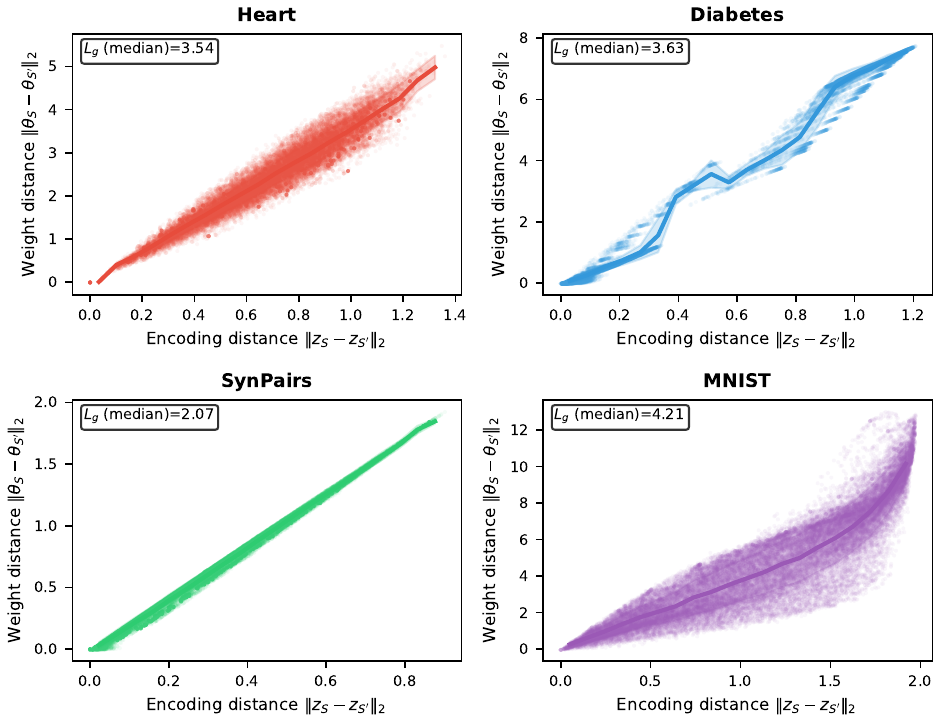}
        \caption{Hamming distance in the Set Transformer-based encoding space versus primary network weight distance.}
        \label{fig:weight_vs_hamming_b}
    \end{subfigure}
    
    \caption{Effects of Set Transformers encoding in the weight representation space.}
    \label{fig:weight_vs_hamming}
\end{figure}

\begin{table}[t]
	\centering
	\caption{Performance for tabular data using different encodings of the knowledge status for the hypernetwork input.}
	\label{tab:abl_encoding}
	\adjustbox{width=\linewidth}{%
		\begin{tabular}{lccccccccccc}
			\toprule
			Variant & Bank & California & Cirrhosis & Diabetes & Heart & Metabric & Miniboone & Wine & Yeast & Overall & Rank \\
			\midrule
			DeepSets & 80.29 & 83.03 & \textbf{49.68} & 57.68 & 80.95 & 65.79 & 84.22 & 93.21 & 41.38 & 70.69 & 2.44 \\
			{} & {\scriptsize\textcolor{gray}{$\pm$0.80}} & {\scriptsize\textcolor{gray}{$\pm$0.86}} & {\scriptsize\textcolor{gray}{$\pm$3.27}} & {\scriptsize\textcolor{gray}{$\pm$2.56}} & {\scriptsize\textcolor{gray}{$\pm$5.40}} & {\scriptsize\textcolor{gray}{$\pm$2.25}} & {\scriptsize\textcolor{gray}{$\pm$1.50}} & {\scriptsize\textcolor{gray}{$\pm$2.86}} & {\scriptsize\textcolor{gray}{$\pm$2.98}} & {\scriptsize\textcolor{gray}{$\pm$17.02}} & {} \\
			RawMask & 79.98 & \textbf{83.15} & 49.05 & 58.44 & 80.80 & 66.40 & 85.11 & 94.54 & 41.89 & 71.04 & 2.22 \\
			{} & {\scriptsize\textcolor{gray}{$\pm$0.52}} & {\scriptsize\textcolor{gray}{$\pm$1.13}} & {\scriptsize\textcolor{gray}{$\pm$2.85}} & {\scriptsize\textcolor{gray}{$\pm$2.93}} & {\scriptsize\textcolor{gray}{$\pm$3.80}} & {\scriptsize\textcolor{gray}{$\pm$1.46}} & {\scriptsize\textcolor{gray}{$\pm$0.70}} & {\scriptsize\textcolor{gray}{$\pm$3.61}} & {\scriptsize\textcolor{gray}{$\pm$2.63}} & {\scriptsize\textcolor{gray}{$\pm$17.13}} & {} \\
			Proposed & \textbf{80.31} & 82.65 & 49.57 & \textbf{60.46} & \textbf{81.39} & \textbf{66.99} & \textbf{85.74} & \textbf{94.76} & \textbf{42.20} & \textbf{71.56} & \textbf{1.33} \\
			{} & {\scriptsize\textcolor{gray}{$\pm$0.45}} & {\scriptsize\textcolor{gray}{$\pm$1.13}} & {\scriptsize\textcolor{gray}{$\pm$3.03}} & {\scriptsize\textcolor{gray}{$\pm$1.42}} & {\scriptsize\textcolor{gray}{$\pm$3.68}} & {\scriptsize\textcolor{gray}{$\pm$1.24}} & {\scriptsize\textcolor{gray}{$\pm$0.32}} & {\scriptsize\textcolor{gray}{$\pm$3.01}} & {\scriptsize\textcolor{gray}{$\pm$4.34}} & {\scriptsize\textcolor{gray}{$\pm$16.93}} & {} \\
			\bottomrule
		\end{tabular}
	}%
\end{table}
  \paragraph{Encoding ablation.} We test two alternative encodings for \textsc{Hyper-DFS}: using the raw mask as input, and using Deep Sets encoding \citep{zaheer2017deep}. In Figure \ref{fig:weight_vs_hamming} we show that the distances between models align much better with distances in the task encoding space when using the Set Transformer's encoder than the raw masks. Average performance was superior using Set Transformer encoding, and it was especially on hard datasets like Metabric and Diabetes.

    \begin{table}[t]  
    \centering    
    \caption{Effect of changing the hypernetwork for a standard MLP in the performance of \textsc{Hyper-DFS}, measured using AUAC-F1.}     
    \label{tab:abl_hypernetwork}  
    \adjustbox{width=\linewidth}{%
    \begin{tabular}{lcccccccccc}  
      \toprule    
      Variant & Bank & California & Cirrhosis & Diabetes & Heart & Metabric & MiniBooNE & Wine & Yeast & Overall \\
      \midrule    
      Hyper-DFS (full)
& \textbf{80.35} & 82.65 & \textbf{49.22} & \textbf{59.34} & 82.09     
& \textbf{66.96} & \textbf{85.54} & 93.67 & \textbf{41.82} & \textbf{71.29} \\ 
      {} & {\scriptsize\textcolor{gray}{$\pm$0.48}}    
& {\scriptsize\textcolor{gray}{$\pm$1.13}}    
& {\scriptsize\textcolor{gray}{$\pm$3.38}}
& {\scriptsize\textcolor{gray}{$\pm$1.77}}    
& {\scriptsize\textcolor{gray}{$\pm$3.86}}    
& {\scriptsize\textcolor{gray}{$\pm$1.23}}
& {\scriptsize\textcolor{gray}{$\pm$0.25}}    
& {\scriptsize\textcolor{gray}{$\pm$2.10}}
& {\scriptsize\textcolor{gray}{$\pm$2.25}}    
& {\scriptsize\textcolor{gray}{$\pm$16.97}} \\
      w/o hypernetwork
& 78.80 & \textbf{83.14} & 48.77 & 57.46 & \textbf{82.10}
& 62.13 & 83.75 & \textbf{94.59} & 39.28 & 70.00 \\   
      {}      
& {\scriptsize\textcolor{gray}{$\pm$0.48}}    
& {\scriptsize\textcolor{gray}{$\pm$0.55}}    
& {\scriptsize\textcolor{gray}{$\pm$3.12}}    
& {\scriptsize\textcolor{gray}{$\pm$3.70}}
& {\scriptsize\textcolor{gray}{$\pm$4.83}}    
& {\scriptsize\textcolor{gray}{$\pm$7.34}}
& {\scriptsize\textcolor{gray}{$\pm$0.45}}    
& {\scriptsize\textcolor{gray}{$\pm$3.94}}
& {\scriptsize\textcolor{gray}{$\pm$1.87}}    
& {\scriptsize\textcolor{gray}{$\pm$18.02}} \\
      \bottomrule     
    \end{tabular} 
    }%
  \end{table}

\paragraph{Hypernetwork ablation.} Table \ref{tab:abl_hypernetwork} reports the performance degradation when the hypernetwork is replaced by a single MLP with mask concatenation, holding the Set Transformer encoding fixed. \textsc{Hyper-DFS} improves AUAC-F1 on 6 of 9 datasets and the three exceptions lie within one standard deviation. The effect is most pronounced on Metabric, where the MLP baseline is 4.8 points worse on average and more variable across folds (std 7.34 vs 1.23), indicating that per-subset specialisation stabilises training in the high-dimensional regime. The largest gains concentrate on high-M / many-class datasets (Metabric, MiniBooNE, Diabetes, Yeast), which makes the hypernetwork especially interesting for hard DFS problems.

\begin{table}[t]
	\centering
	\caption{Improvement of the learned DFS policy compared to a random policy, measured using AUAC-F1.}
	\label{tab:delta_auac_f1}
		\begin{tabular}{
                l
                *{4}{c}
                @{\hspace{6pt}}
                *{3}{c}
                @{\hspace{6pt}}
                c
            }
			\toprule
			& \multicolumn{4}{c}{Single-model} & \multicolumn{3}{c}{Multi-model} & \multicolumn{1}{c}{Hypernetworks} \\
			\cmidrule(lr){2-5}\cmidrule(lr){6-8}\cmidrule(lr){9-9}
			{Dataset} & {CWCF} & {RePa} & {SEFA} & {VIP} & {Card.} & {Ensemble} & {MoE} & {Hyper-DFS} \\
			\midrule
			Bank & 0.78 & 11.86 & 11.85 & {\bfseries\underline{13.31}} & 2.59 & 11.85 & \bfseries 11.96 &  12.31 \\
			California & 2.46 & 6.81 & 6.52 & {\bfseries\underline{7.79}} & -1.15 & \bfseries 7.17 & 5.38 &  4.42 \\
			Cirrhosis & 1.90 & 4.41 & {\bfseries\underline{6.53}} & 5.02 & 0.22 & 4.53 & \bfseries 4.78 & 2.64 \\
			Diabetes & 0.71 & 9.69 & 20.50 & \bfseries 21.85 & 12.20 & 17.32 & \bfseries 19.76 & {\bfseries\underline{22.48}} \\
			Heart & 1.60 & 5.80 & 5.63 & {\bfseries\underline{6.30}} & 1.38 & 5.61 & \bfseries 5.95 &  4.81 \\
			Metabric & 2.14 & \bfseries 20.23 & 14.39 & 17.85 & 14.06 & {\bfseries\underline{22.15}} & 16.43 &  18.98 \\
			Miniboone & 3.52 & 20.08 & 21.75 & {\bfseries\underline{24.42}} & 16.49 & \bfseries 22.66 & 21.49 &  16.84 \\
			Wine & 6.35 & 7.11 & 6.82 & \bfseries 7.19 & 4.30 & 7.69 & {\bfseries\underline{8.24}} &  7.61 \\
			Yeast & -0.71 & \bfseries 5.64 & 4.10 & 1.87 & -3.81 & 4.77 & {\bfseries\underline{6.70}} &  3.42 \\
			\midrule
			Avg Rank & 7.22 & 4.00 & 3.89 & {\bfseries\underline{2.33}} & 7.11 & 3.11 &  \bfseries 2.89 & 4.11 \\
			Avg $\Delta$ & 2.19 & 10.18 & 10.90 & {\bfseries\underline{11.73}} & 5.14 &  \bfseries 11.53 & 10.74 &  10.39 \\
			\bottomrule
	\end{tabular}
\end{table}
\begin{wrapfigure}{r}{0.3\linewidth}
    \centering
    \includegraphics[width=\linewidth]{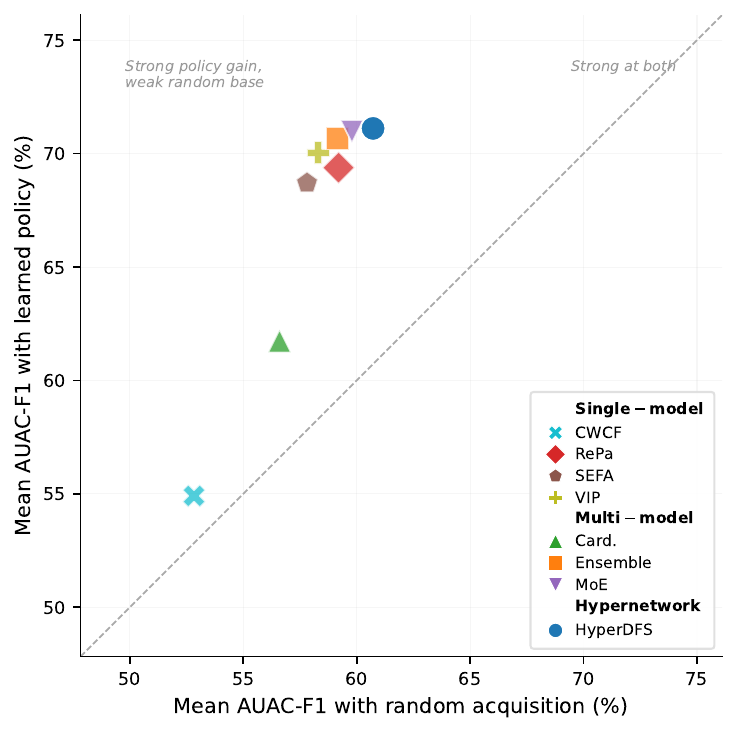}
    \caption{Mean AUAC-F1 averaged across nine tabular benchmarks under random feature acquisition (x-axis) versus learned acquisition policy (y-axis).}
    \label{fig:policy_gains_tab}
\end{wrapfigure}
\paragraph{Acquisition policy.} We ablate the DFS acquisition policy by replacing it with random sequential feature acquisition. The gain $\Delta\text{AUAC} = \text{AUAC}_\text{policy} - \text{AUAC}_\text{random}$ quantifies how much of the performance is attributable to the policy itself, as opposed to the model's general ability to handle arbitrary subsets. Results for tabular datasets are shown in Table \ref{tab:delta_auac_f1}. We can see that the magnitude of the improvement is, on average, around 10 points. Two methods are significantly below this value: CWCF and Cardinality-based routing, which also underperformed several other methods. VIP policy improved performance the most over random acquisition. The relationship between both learned and random DFS policies is shown in Figure \ref{fig:policy_gains_tab}, where we can see that the performance of \textsc{Hyper-DFS} was superior using both random and learned policies, which also means that the \textsc{Hyper-DFS} predictor model is less dependent on the learned acquisition policy to achieve good results.

\begin{figure}
    \centering
    \includegraphics[width=\linewidth]{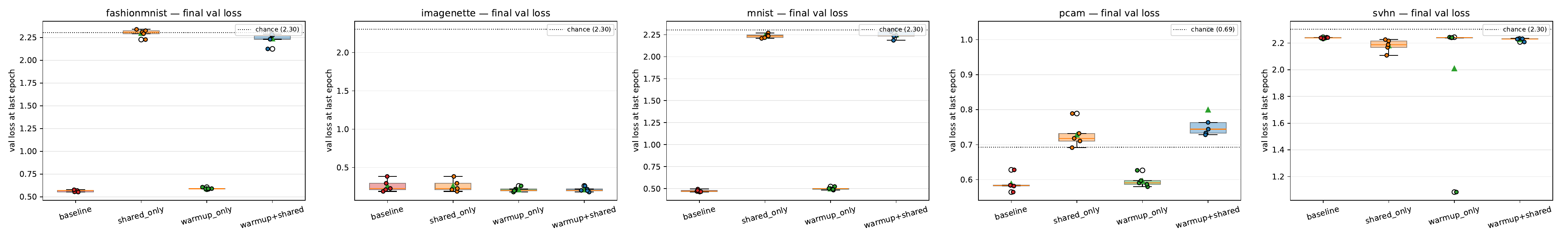}
    \caption{Effect in loss after 10 epochs of the mask restriction and LR warmup. Restricting the number of masks solved the problem of training collapse, but it also hurts performance on the first steps of the training.}
    \label{fig:performance_mask_restriction}
\end{figure}
\begin{table}[t]
  \centering
  \caption{LR Warmup and shared-mask ablation: failed training rate and training improvement as percentage of the initial loss. Metrics reported after 10 epochs and averaged across 5 seeds.}
  \label{tab:warmup-shared-ablation}
  \begin{tabular}{llcc}
    \toprule
    Dataset & Config & Failed & Train improvement \% (mean $\pm$ std) \\
    \midrule
    fashionmnist & baseline & 0/5 & $47.7 \pm 0.6$ \\
     & shared\_only & 0/5 & $55.4 \pm 1.3$ \\
     & warmup+shared & 0/5 & $66.7 \pm 1.3$ \\
     & warmup\_only & 0/5 & $72.7 \pm 0.2$ \\
    \midrule
    imagenette & baseline & 0/5 & $88.5 \pm 1.5$ \\
     & shared\_only & 0/5 & $88.5 \pm 1.5$ \\
     & warmup+shared & 0/5 & $95.3 \pm 0.3$ \\
     & warmup\_only & 0/5 & $95.3 \pm 0.3$ \\
    \midrule
    mnist & baseline & 0/5 & $58.4 \pm 0.7$ \\
     & shared\_only & 0/5 & $64.0 \pm 1.7$ \\
     & warmup+shared & 0/5 & $74.7 \pm 1.1$ \\
     & warmup\_only & 0/5 & $78.5 \pm 0.2$ \\
    \midrule
    pcam & baseline & 0/5 & $12.7 \pm 2.0$ \\
     & shared\_only & 0/5 & $11.4 \pm 2.5$ \\
     & warmup+shared & 0/5 & $8.8 \pm 0.7$ \\
     & warmup\_only & 0/5 & $14.3 \pm 0.9$ \\
    \midrule
    svhn & baseline & 5/5 & $0.2 \pm 0.0$ \\
     & shared\_only & 0/5 & $55.7 \pm 1.5$ \\
     & warmup+shared & 0/5 & $50.7 \pm 0.9$ \\
     & warmup\_only & 4/5 & $10.8 \pm 20.6$ \\
    \bottomrule
  \end{tabular}
\end{table}
\paragraph{No mask restriction.} The gradient issue of the compressor loss described in Section 4.2 was solved by restricting the number of different masks in the pretraining to one, and by setting a linear LR Warmup. For this ablation, we see the ratio of failed trainings for all the image datasets tested, shown in Table \ref{tab:warmup-shared-ablation}, after 10 epochs of training. We can see there that SVHN systematically fails to train for every seed tested without LR Warmup or mask restriction, while other datasets run fine. We can also see that LR Warmup solved the problem on one of five seeds, so restricting the number of masks is necessary at the beginning of training. However, we also report in Figure \ref{fig:performance_mask_restriction} that restricting the number of masks also hurts generalisation, suggesting that a compromise between setting the number of different masks to $1$ and not controlling them might be the optimal solution to solve the training stability issues while retaining good performance in the early steps of training.

\subsection{Extended Results} \label{apx:curves}

\begin{figure}[p]
    \centering
    \renewcommand{\arraystretch}{1.2}
    \begin{tabular}{@{}ccc@{}}
        \multicolumn{3}{c}{\textsc{Synthetic datasets}} \\[0.3em]
        \includegraphics[width=0.32\linewidth]{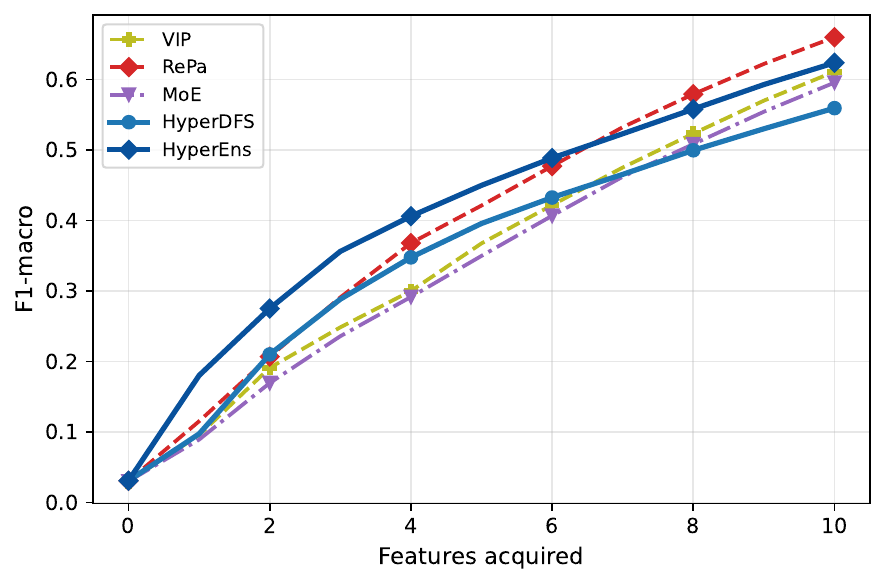} &
        \includegraphics[width=0.32\linewidth]{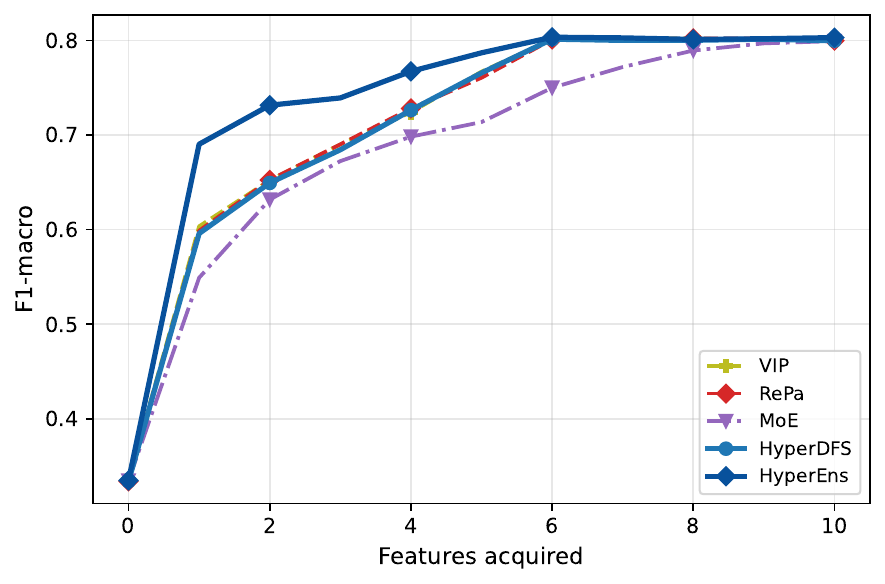} &
        \includegraphics[width=0.32\linewidth]{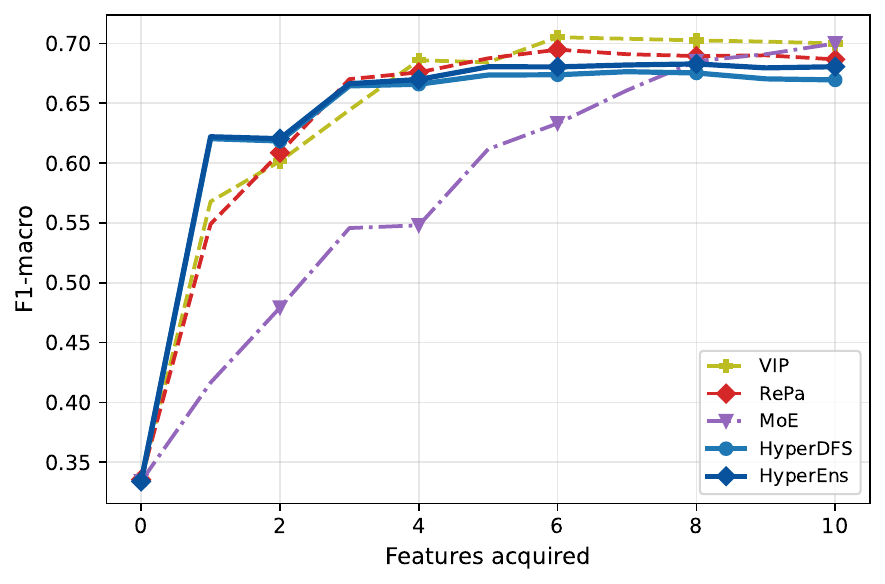} \\[-0.3em]
        {\small (a) Cube} & {\small (b) Sim1} & {\small (c) Sim2} \\[0.5em]
        \includegraphics[width=0.32\linewidth]{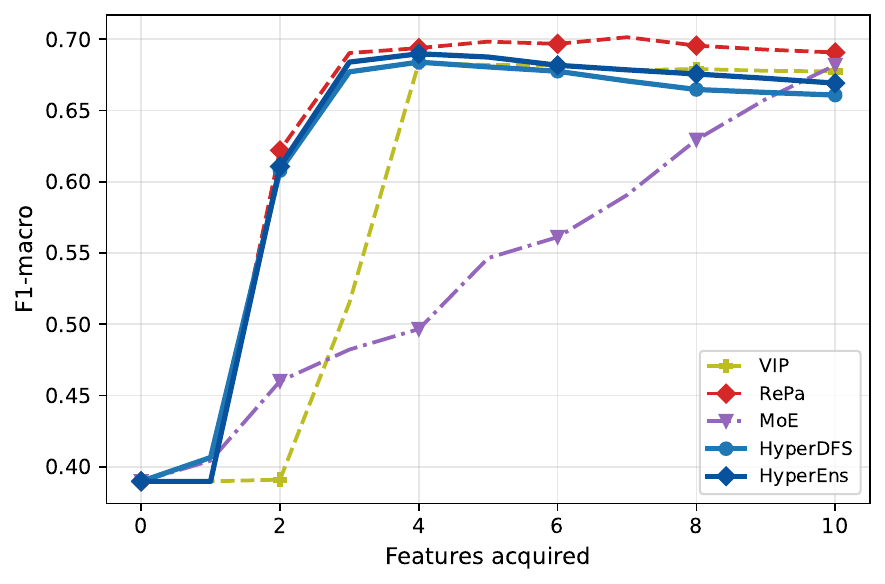} &
        \includegraphics[width=0.32\linewidth]{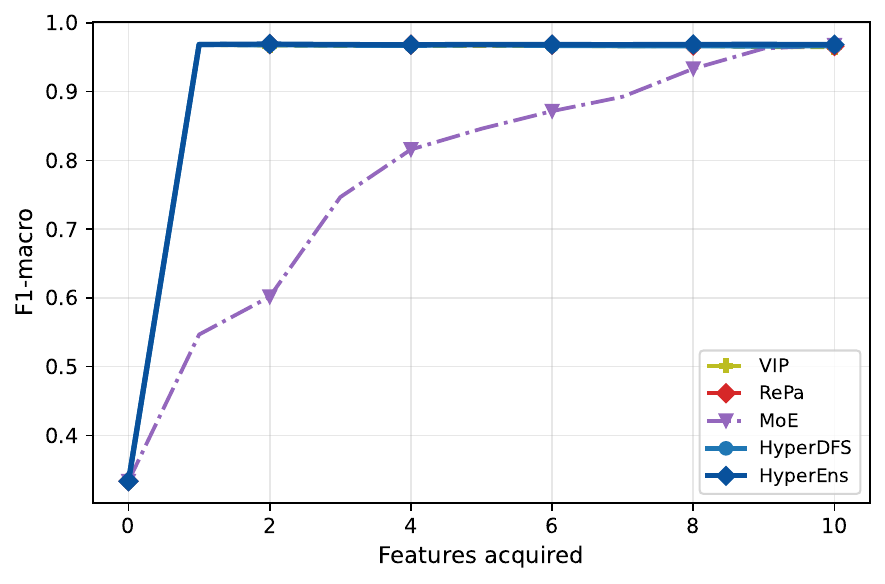} &
        \includegraphics[width=0.32\linewidth]{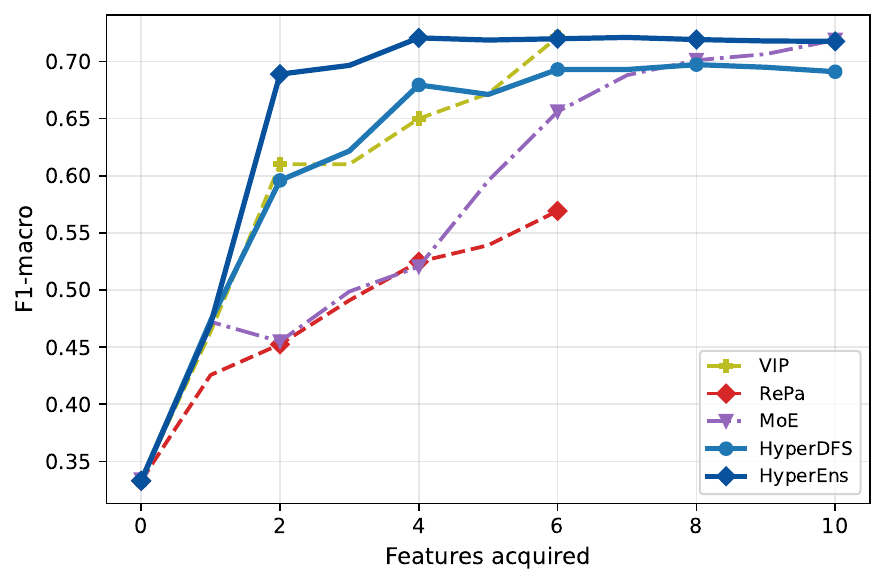} \\[-0.3em]
        {\small (d) Sim3} & {\small (e) ProxySub} & {\small (f) SynPairs} \\[0.8em]
        \midrule \\[-0.6em]
        \multicolumn{3}{c}{\textsc{Tabular datasets}} \\[0.3em]
        \includegraphics[width=0.32\linewidth]{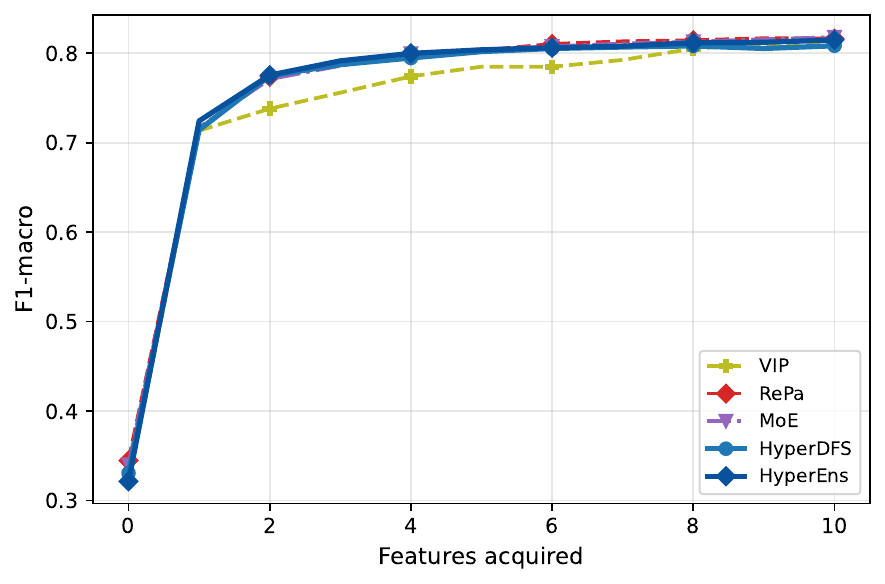} &
        \includegraphics[width=0.32\linewidth]{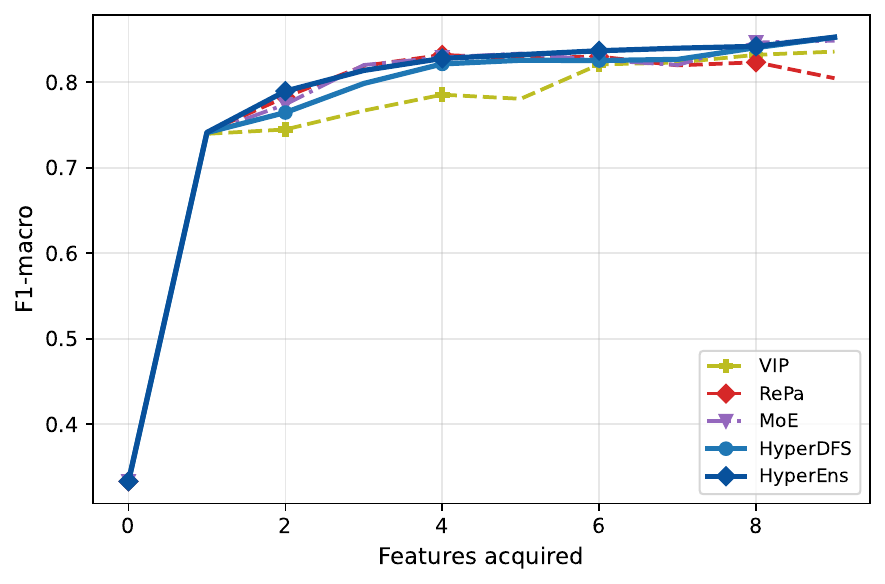} &
        \includegraphics[width=0.32\linewidth]{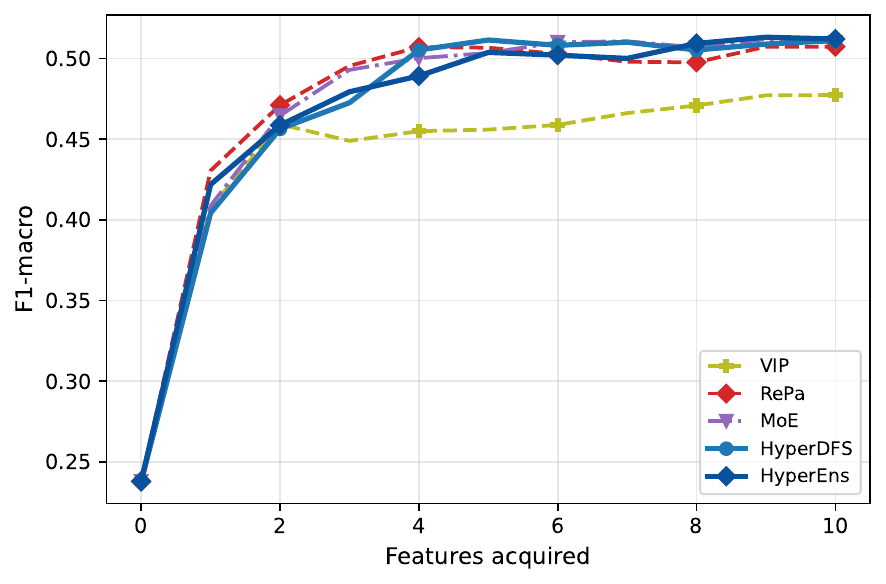} \\[-0.3em]
        {\small (g) Bank} & {\small (h) California} & {\small (i) Cirrhosis} \\[0.5em]
          \includegraphics[width=0.32\linewidth]{figures/best_group_diabetes_f1_macro.pdf} &
        \includegraphics[width=0.32\linewidth]{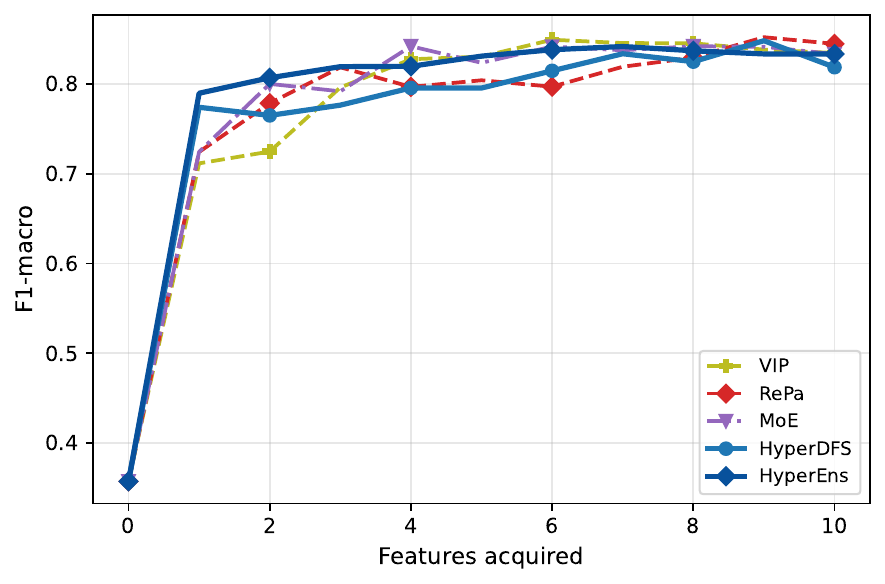} &
        \includegraphics[width=0.32\linewidth]{figures/best_group_metabric_f1_macro.pdf} \\[-0.3em]
        {\small (j) Diabetes} & {\small (k) Heart} & {\small (l) Metabric} \\[-0.3em]
        \includegraphics[width=0.32\linewidth]{figures/best_group_miniboone_f1_macro.pdf} &
        \includegraphics[width=0.32\linewidth]{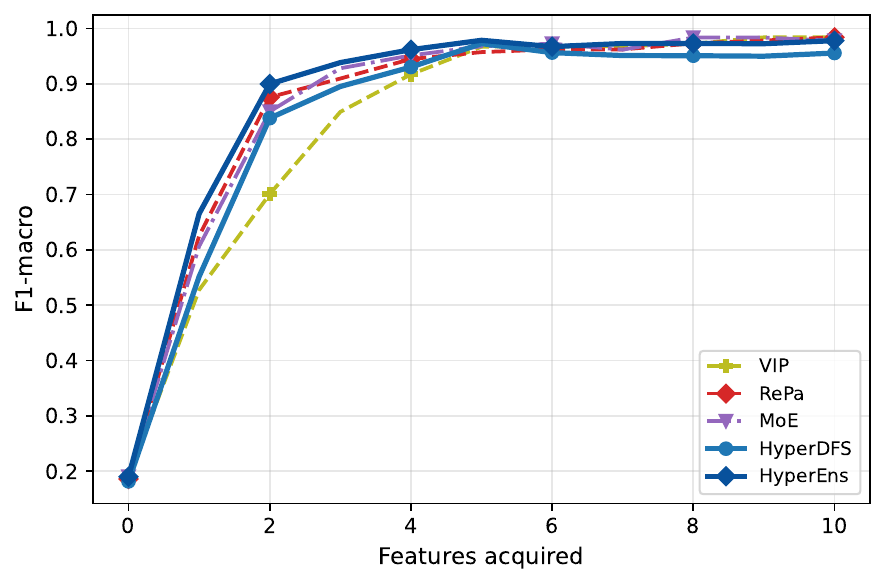} &
        \includegraphics[width=0.32\linewidth]{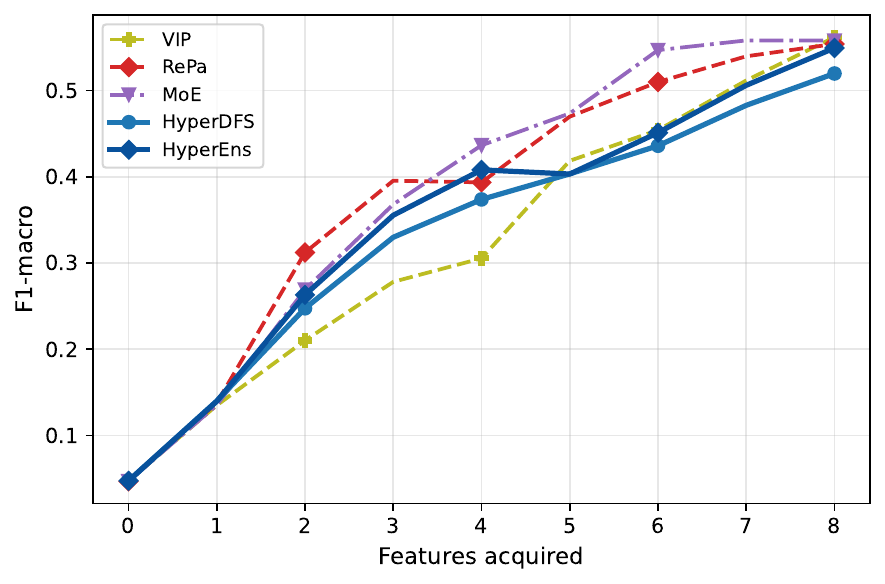} \\[-0.3em]
        {\small (m) Miniboone} & {\small (n) Wine} & {\small (o) Yeast} \\
    \end{tabular}
    \caption{F1-macro acquisition curves for all synthetic (a--f) and tabular (g--l) datasets. Each curve reports the mean F1-macro across 5-fold CV as a function of the number of acquired features, for the top-performing methods.}
    \label{fig:appendix_curves}
\end{figure}

Figure~\ref{fig:appendix_curves} reports the full set of acquisition curves for synthetic and real-world tabular data omitted from the main text. We can see there consistent patterns across datasets. On the synthetic benchmarks, HyperEns achieves the steepest initial gains on Cube, Sim1, and SynPairs, confirming that per-subset specialisation is most valuable when few features have been acquired and the subset space is most heterogeneous. On ProxySub, all methods except MoE reach near-perfect performance with only two features, reflecting the low intrinsic difficulty of this task; the large gap for MoE suggests that learned routing struggles when the discriminative signal is concentrated in very few features. On Sim2 and Sim3, the hypernetwork variants lead at low budgets but are matched or slightly overtaken by RePa and VIP at higher budgets, consistent with the Miniboone pattern discussed in the main text: as the observed set grows, per-subset specialisation provides diminishing returns.
Among the tabular datasets, Bank, California, and Wine exhibit rapid saturation where all top methods converge to similar performance beyond 4--6 features, leaving limited room for differentiation. Heart and SynPairs show the clearest advantage for HyperEns at low-to-mid budgets, while Cirrhosis and Yeast are the most challenging settings, with all methods clustered within narrow performance bands and no single approach consistently dominating. Notably, on Yeast, RePa and MoE achieve the best high-budget performance, suggesting that on tasks with many weakly informative features, the reparametrisation and routing strategies can be more effective than the hypernetwork when the full feature set is nearly observed. Across all datasets, a recurring theme is that the hypernetwork advantage is concentrated in the low-budget regime — precisely the regime of greatest practical interest in cost-sensitive acquisition.

  \begin{figure}[p]
      \centering
      \renewcommand{\arraystretch}{1.2}
      \begin{tabular}{@{}ccc@{}}
          \multicolumn{3}{c}{\textsc{Image datasets}} \\[0.3em]
          \includegraphics[width=0.32\linewidth]{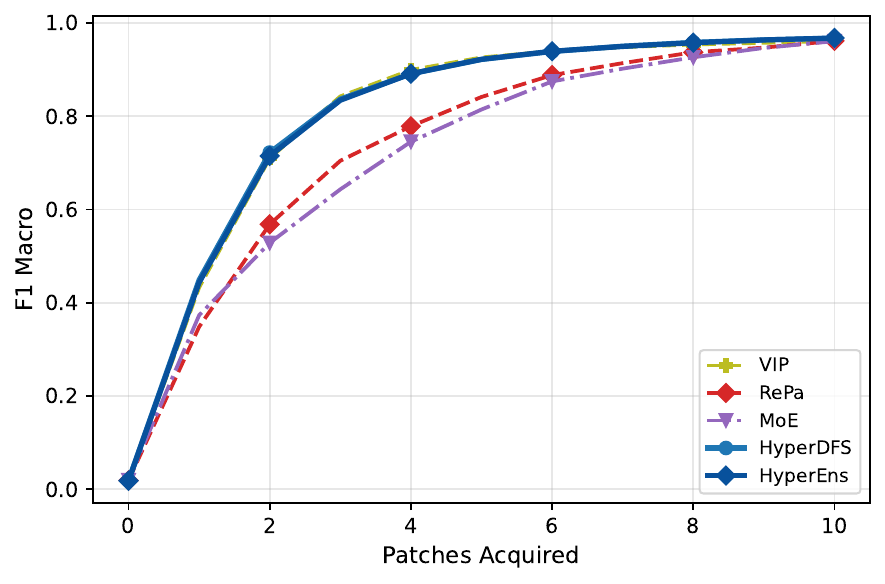} &
          \includegraphics[width=0.32\linewidth]{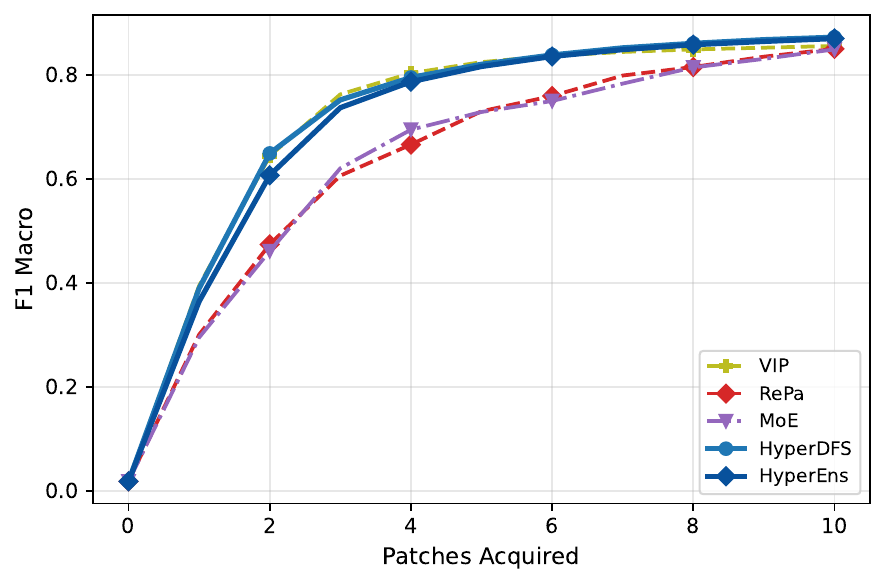} &
          \includegraphics[width=0.32\linewidth]{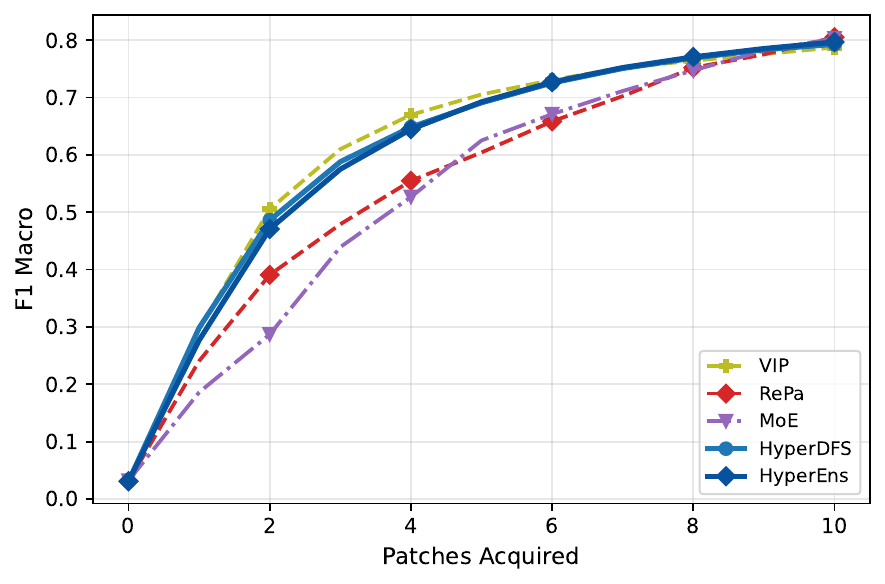} \\[-0.3em]
          {\small (a) MNIST} & {\small (b) Fashion-MNIST} & {\small (c) SVHN} \\[0.5em]
          \includegraphics[width=0.32\linewidth]{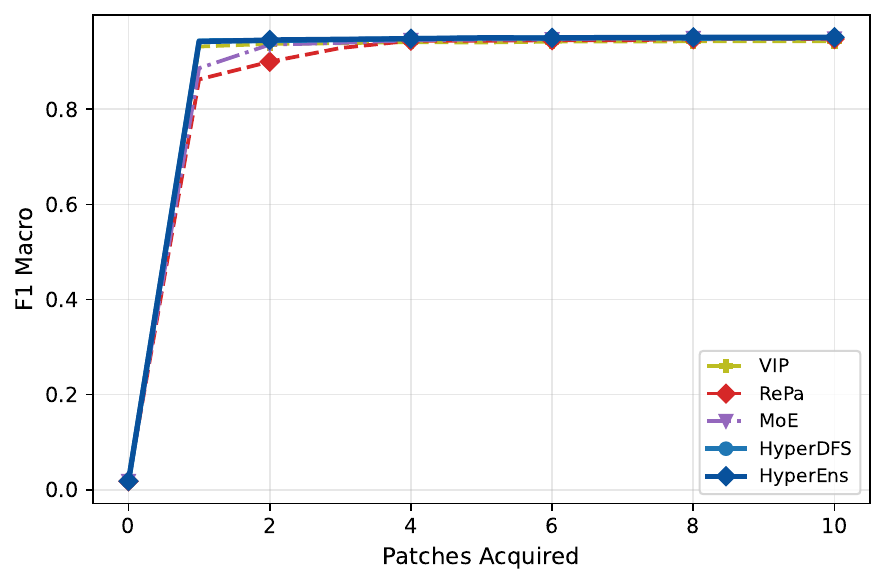} &
          \includegraphics[width=0.32\linewidth]{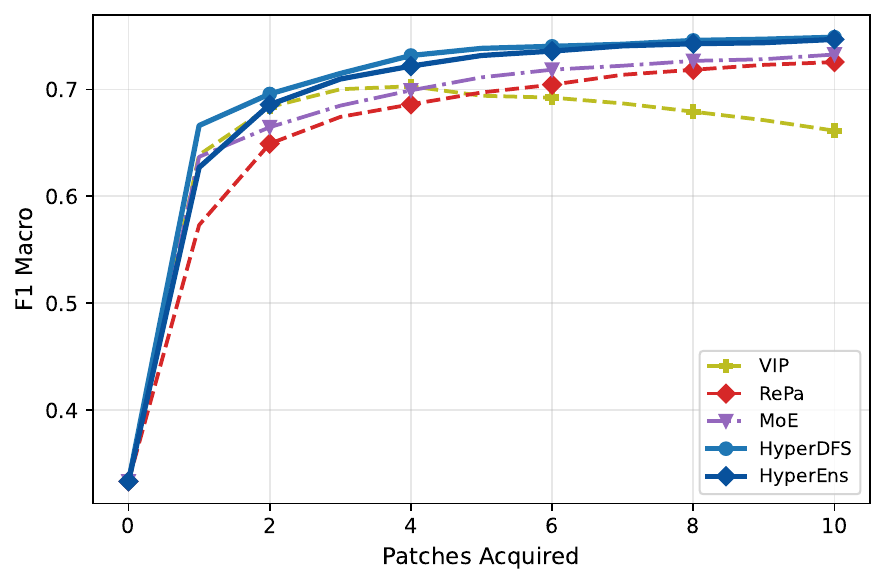} &
          \\[-0.3em]
          {\small (d) Imagenette} & {\small (e) PCam} & \\
      \end{tabular}
      \caption{F1-macro acquisition curves for all image datasets for unseen feature subsets in training. Each curve reports the mean F1-macro across 5-fold CV as a function of the number of acquired features, for the top-performing methods.}
      \label{fig:appendix_image_curves}
  \end{figure}

\begin{table}[t]
	\centering
	\caption{Zero-shot AUAC-F1 on held-out feature subsets (mean $\pm$ std over CV folds, \%).}
	\label{tab:zeroshot_auac_f1}
		\begin{tabular}{l *{3}{S[table-format=2.2, detect-weight=true]}}
			\toprule
			{Dataset} & {VIP} & {MoE} & {Hyper-DFS} \\
			\midrule
			Bank & 59.54 & 70.51 & \bfseries 70.79 \\
			{} & {\scriptsize\textcolor{gray}{$\pm$0.63}} & {\scriptsize\textcolor{gray}{$\pm$0.58}} & {\scriptsize\textcolor{gray}{$\pm$0.83}} \\
			California & 47.83 & 75.90 & \bfseries 76.10 \\
			{} & {\scriptsize\textcolor{gray}{$\pm$3.61}} & {\scriptsize\textcolor{gray}{$\pm$0.95}} & {\scriptsize\textcolor{gray}{$\pm$1.18}} \\
			Cirrhosis & 24.51 & \bfseries 47.00 & 46.85 \\
			{} & {\scriptsize\textcolor{gray}{$\pm$0.64}} & {\scriptsize\textcolor{gray}{$\pm$1.97}} & {\scriptsize\textcolor{gray}{$\pm$2.30}} \\
			Diabetes & \bfseries 49.84 & 45.67 & 49.51 \\
			{} & {\scriptsize\textcolor{gray}{$\pm$1.08}} & {\scriptsize\textcolor{gray}{$\pm$1.83}} & {\scriptsize\textcolor{gray}{$\pm$1.48}} \\
			Heart & 44.94 & \bfseries 76.62 & 75.94 \\
			{} & {\scriptsize\textcolor{gray}{$\pm$3.45}} & {\scriptsize\textcolor{gray}{$\pm$4.27}} & {\scriptsize\textcolor{gray}{$\pm$4.55}} \\
			Metabric & 59.17 & 57.32 & \bfseries 59.87 \\
			{} & {\scriptsize\textcolor{gray}{$\pm$0.96}} & {\scriptsize\textcolor{gray}{$\pm$2.15}} & {\scriptsize\textcolor{gray}{$\pm$2.60}} \\
			Miniboone & 74.10 & 76.23 & \bfseries 81.08 \\
			{} & {\scriptsize\textcolor{gray}{$\pm$2.19}} & {\scriptsize\textcolor{gray}{$\pm$1.51}} & {\scriptsize\textcolor{gray}{$\pm$1.42}} \\
			Wine & 34.46 & \bfseries 86.41 & 85.62 \\
			{} & {\scriptsize\textcolor{gray}{$\pm$5.23}} & {\scriptsize\textcolor{gray}{$\pm$1.25}} & {\scriptsize\textcolor{gray}{$\pm$1.08}} \\
			Yeast & 24.57 & \bfseries 32.38 & 30.82 \\
			{} & {\scriptsize\textcolor{gray}{$\pm$1.55}} & {\scriptsize\textcolor{gray}{$\pm$2.79}} & {\scriptsize\textcolor{gray}{$\pm$3.09}} \\
			\midrule
			FashionMNIST & 78.06 & 76.44 & \bfseries 81.22 \\
			{} & {\scriptsize\textcolor{gray}{$\pm$0.42}} & {\scriptsize\textcolor{gray}{$\pm$1.18}} & {\scriptsize\textcolor{gray}{$\pm$0.59}} \\
			Imagenette & 94.28 & 94.26 & \bfseries 94.80 \\
			{} & {\scriptsize\textcolor{gray}{$\pm$0.38}} & {\scriptsize\textcolor{gray}{$\pm$0.57}} & {\scriptsize\textcolor{gray}{$\pm$0.43}} \\
			MNIST & 82.71 & 81.12 & \bfseries 86.69 \\
			{} & {\scriptsize\textcolor{gray}{$\pm$0.51}} & {\scriptsize\textcolor{gray}{$\pm$0.98}} & {\scriptsize\textcolor{gray}{$\pm$0.62}} \\
			\midrule
			Avg Rank (Tabular) & 2.67 & 1.78 &  \textbf{1.56} \\
			Avg AUAC (Tabular) & 46.55 & 63.11 & \textbf{64.06} \\
			 & {\scriptsize\textcolor{gray}{$\pm$15.70}} & {\scriptsize\textcolor{gray}{$\pm$17.18}} & {\scriptsize\textcolor{gray}{$\pm$17.37}} \\
			
			Avg Rank (Image) & 2.00 & 3.00 & \textbf{1.00} \\
			Avg AUAC (Image) & 85.02 & 83.94 & \textbf{87.57} \\
			 & {\scriptsize\textcolor{gray}{$\pm$6.82}} & {\scriptsize\textcolor{gray}{$\pm$7.54}} & {\scriptsize\textcolor{gray}{$\pm$5.58}} \\
			\bottomrule
	\end{tabular}
\end{table}

\begin{figure}[p]
    \centering
    \renewcommand{\arraystretch}{1.2}
    \begin{tabular}{@{}ccc@{}}
        \multicolumn{3}{c}{\textsc{Tabular datasets}} \\[0.3em]
        \includegraphics[width=0.32\linewidth]{figures/zeroshot/tabular_bank_cardinality_f1.pdf} &
        \includegraphics[width=0.32\linewidth]{figures/zeroshot/tabular_california_housing_cardinality_f1.pdf} &
        \includegraphics[width=0.32\linewidth]{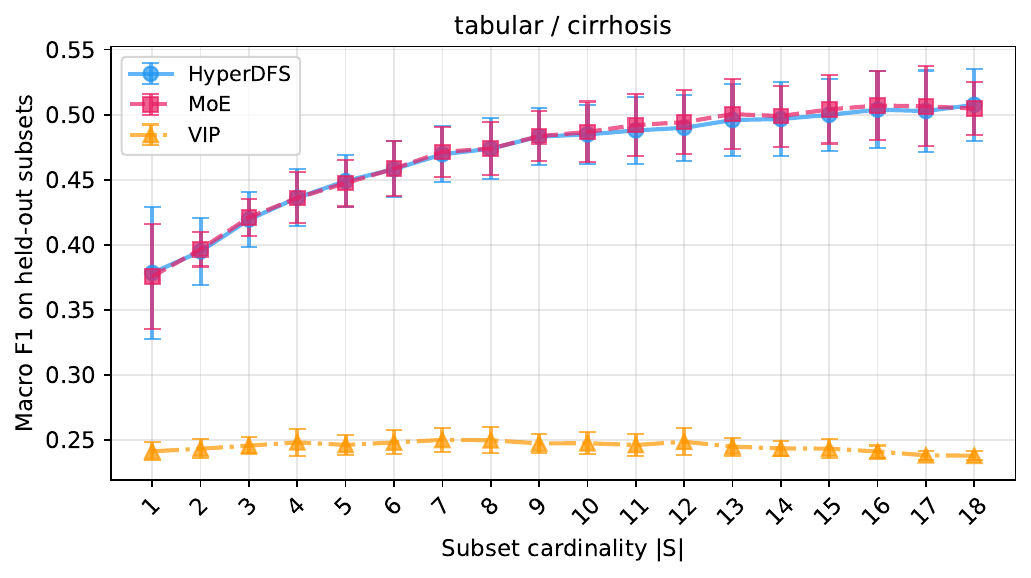} \\[-0.3em]
        {\small (g) Bank} & {\small (h) California} & {\small (i) Cirrhosis} \\[0.5em]
        \includegraphics[width=0.32\linewidth]{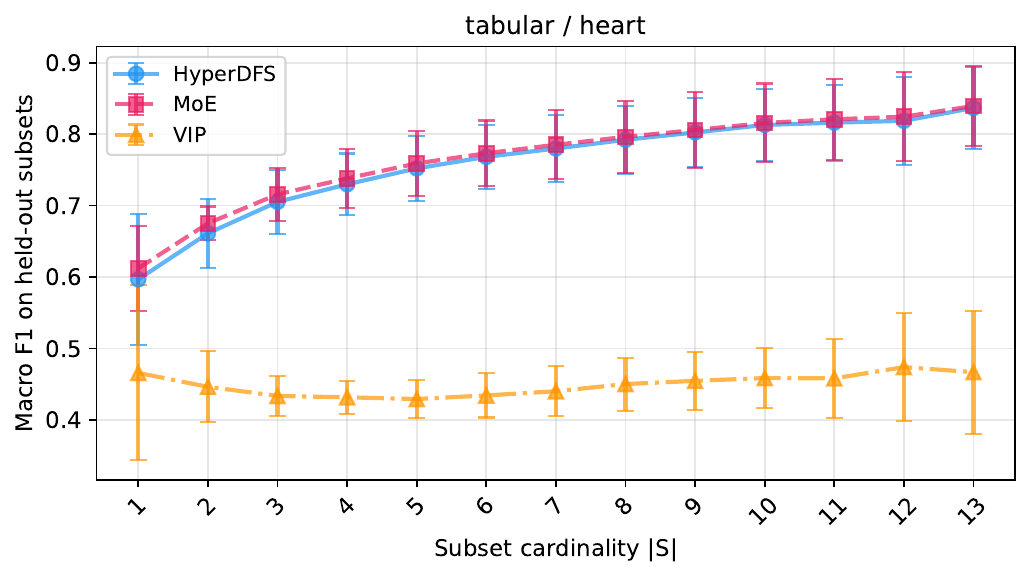} &
        \includegraphics[width=0.32\linewidth]{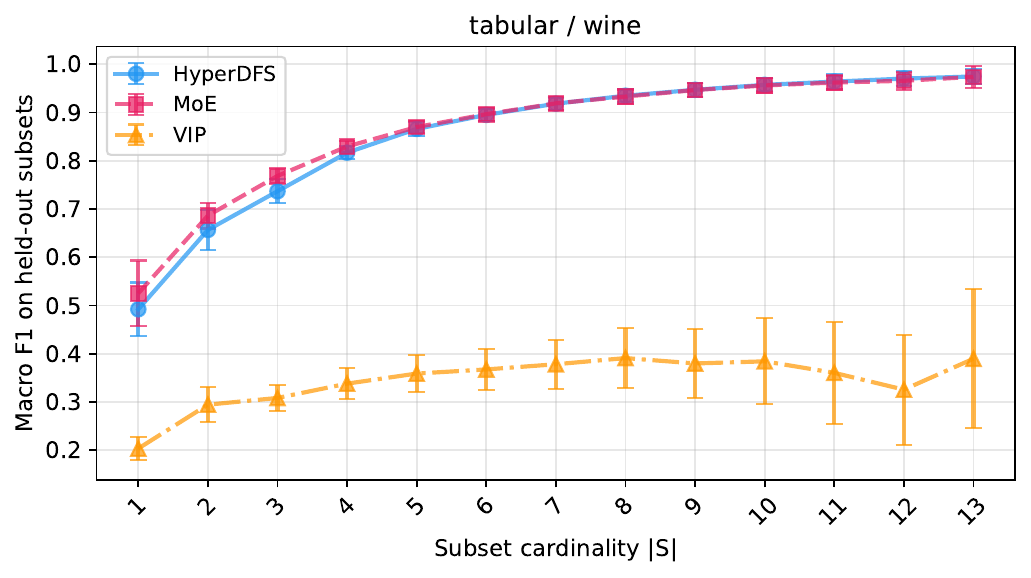} &
        \includegraphics[width=0.32\linewidth]{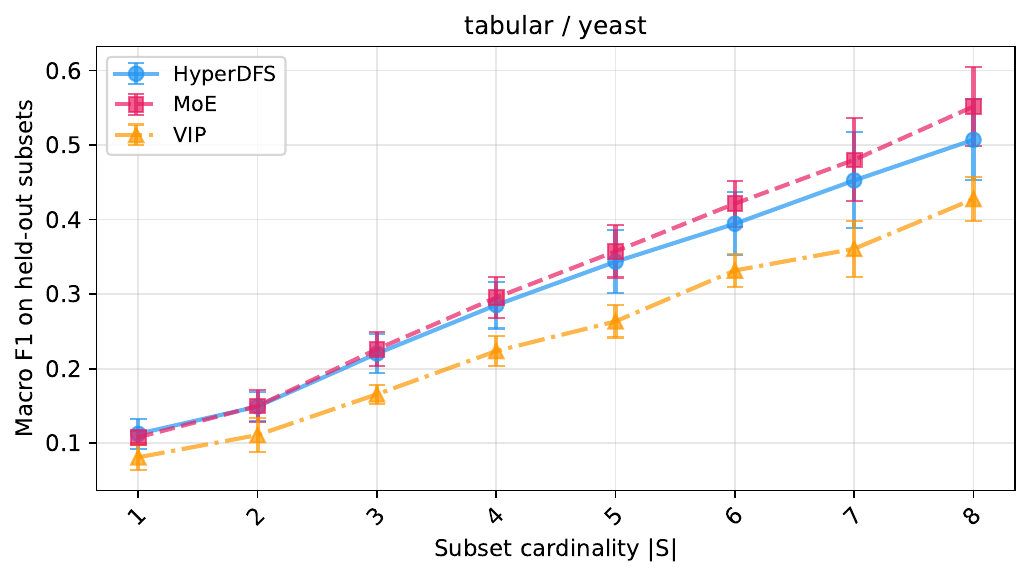} \\[-0.3em]
        {\small (j) Heart} & {\small (k) Wine} & {\small (l) Yeast} \\
    \end{tabular}
    \caption{F1-macro acquisition curves for all tabular datasets for unseen feature subsets in training. Each curve reports the mean F1-macro across 5-fold CV as a function of the number of acquired features, for the top-performing methods.}
    \label{fig:appendix_curves_zeroshot}
\end{figure}

Table~\ref{tab:zeroshot_auac_f1} reports the full per-dataset zero-shot AUAC-F1 results that underlie the aggregated report in the main text. VIP is the one clearly performing worse: on California, Cirrhosis, Heart, and Wine, VIP loses more than 22 absolute points to both MoE and the Hypernetwork, with the largest gap on Wine ($-51.95$ against MoE), whereas on Bank, Metabric, and Miniboone the gap shrinks to between 2 and 11 points. Diabetes is the only tabular dataset where VIP narrowly leads ($+0.33$ over \textsc{Hyper-DFS}, $+4.17$ over MoE), and Yeast is the one where all three methods cluster within a 7-point band on the lowest absolute scores in the benchmark. These cases suggest that single-model degradation under unseen subsets is most pronounced when the task admits distinct decision boundaries for different feature combinations, while on naturally hard datasets, the room for per-subset specialisation is small and the three approaches converge.

The two multi-model methods are close on tabular benchmarks: \textsc{Hyper-DFS} wins on four datasets (Bank, California, Metabric, Miniboone), MoE wins on four (Cirrhosis, Heart, Wine, Yeast), and the average rank (1.56 vs.\ 1.78) and average AUAC (64.06 vs.\ 63.11) place them within a fraction of a point of each other. The largest gap in either direction is on Miniboone ($+4.85$ for \textsc{Hyper-DFS}), which is the highest-dimensional tabular benchmark in the suite. We emphasise that this parity is achieved with a fundamentally different cost profile: MoE stores $K$ distinct sets of expert parameters, whereas \textsc{Hyper-DFS} is a single model where  all subset-specific classifiers are generated on demand from a shared hypernetwork. The image benchmarks show a more positive balance for VIP. However, \textsc{Hyper-DFS} still wins on all three datasets, with the largest gaps on MNIST ($+3.98$ over VIP) and FashionMNIST ($+3.16$ over VIP), while MoE underperforms VIP on every image benchmark. We can conclude that across both modalities, \textsc{Hyper-DFS} therefore matches or exceeds the strongest multi-model baseline while using the storage cost of a single model.

\subsection{Sequential acquisition examples}

\begin{figure}[t]
    \centering
    \begin{subfigure}[t]{0.48\linewidth}
        \centering
        \includegraphics[width=\linewidth]{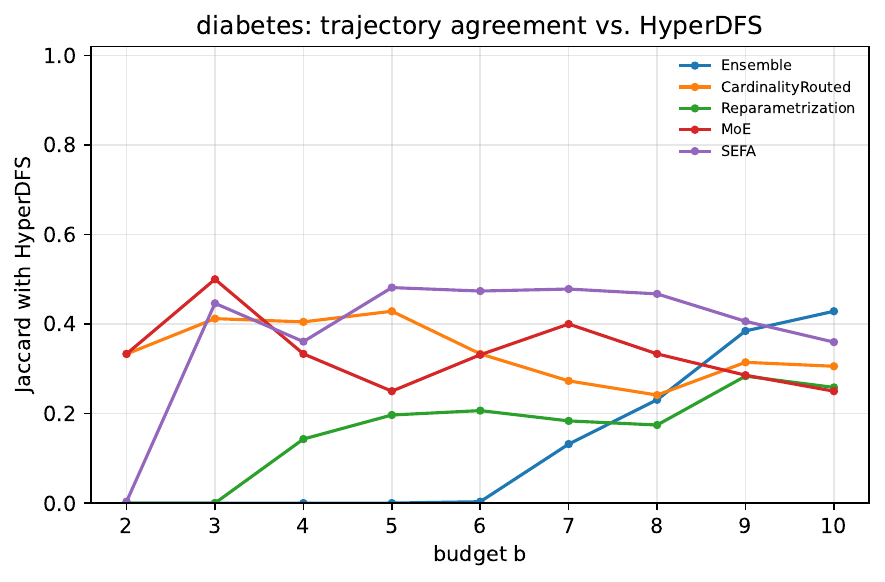}
        \label{fig:jaccard_budget_diabetes}
    \end{subfigure}
    \hfill
    \begin{subfigure}[t]{0.48\linewidth}
        \centering
        \includegraphics[width=\linewidth]{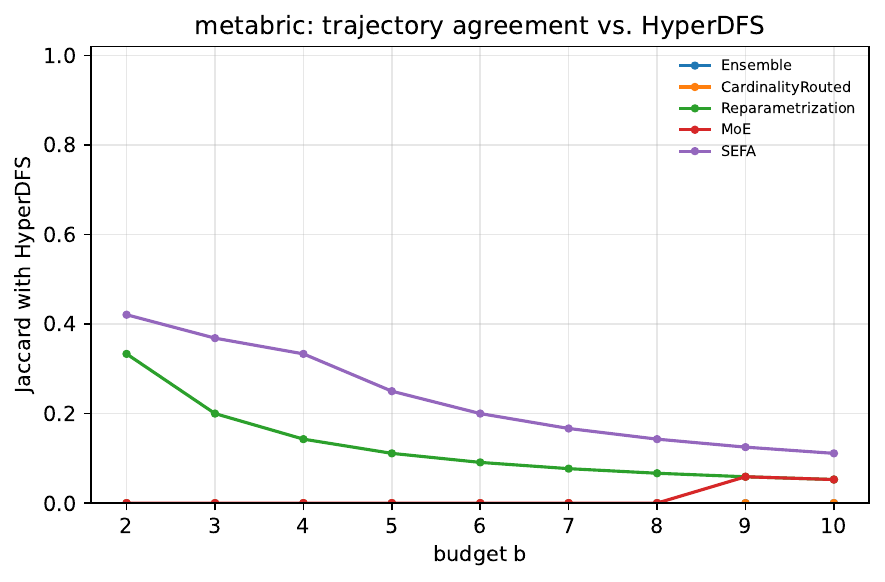}
    \end{subfigure}
    \caption{Evolution of Jaccard similarity of the features selected in each budget, comparing different DFS methods to \textsc{Hyper-DFS}.}
    \label{fig:comparison_trajectories_budget}
\end{figure}

\begin{figure}[t]
    \centering
    \begin{subfigure}[t]{0.48\linewidth}
        \centering
        \includegraphics[width=\linewidth]{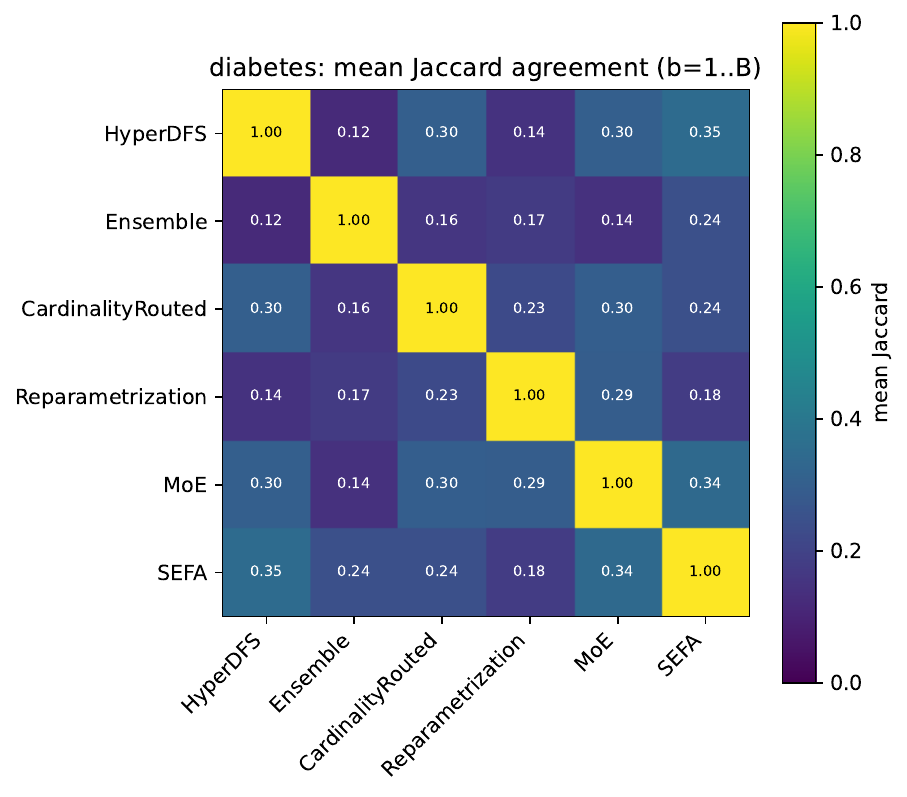}
        \caption{Diabetes}
        \label{fig:jaccard_heatmap_diabetes}
    \end{subfigure}
    \hfill
    \begin{subfigure}[t]{0.48\linewidth}
        \centering
        \includegraphics[width=\linewidth]{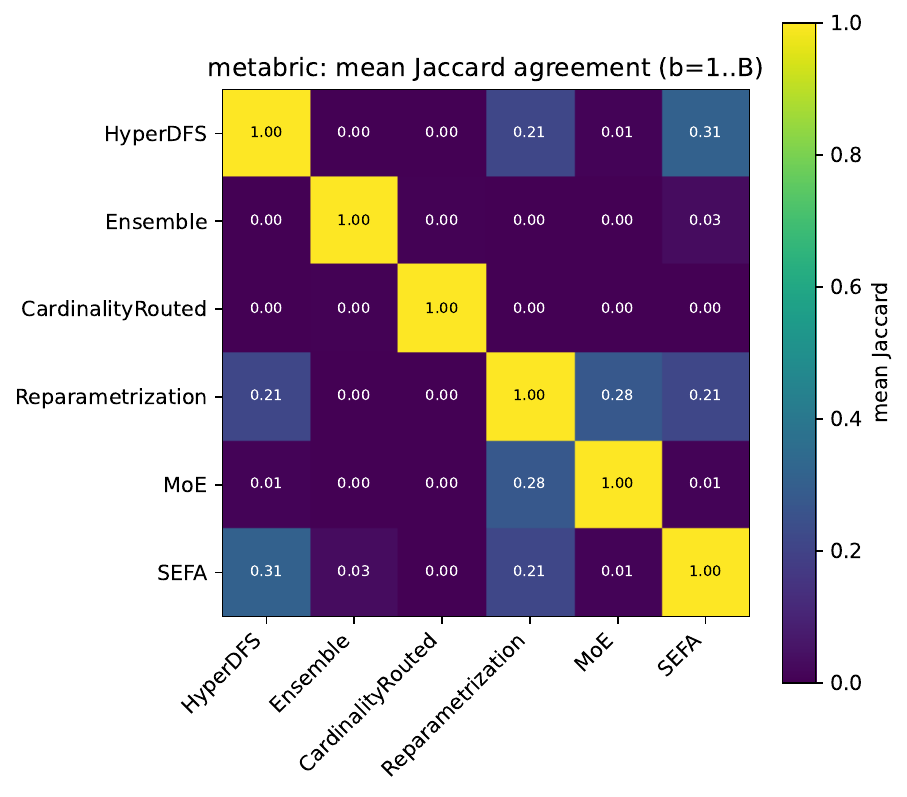}
        \caption{Metabric}
    \end{subfigure}
    \caption{Pairwise Jaccard similarity between feature subsets selected at different acquisition budgets for different DFS methods. Each cell $(i,j)$ represents the overlap between subsets selected at budgets $i$ and $j$, highlighting the stability of the acquisition policy across datasets.}
    \label{fig:jaccard_heatmaps}
\end{figure}

\begin{figure}[t]
    \centering
    \begin{subfigure}[t]{0.48\linewidth}
        \centering
        \includegraphics[width=\linewidth]{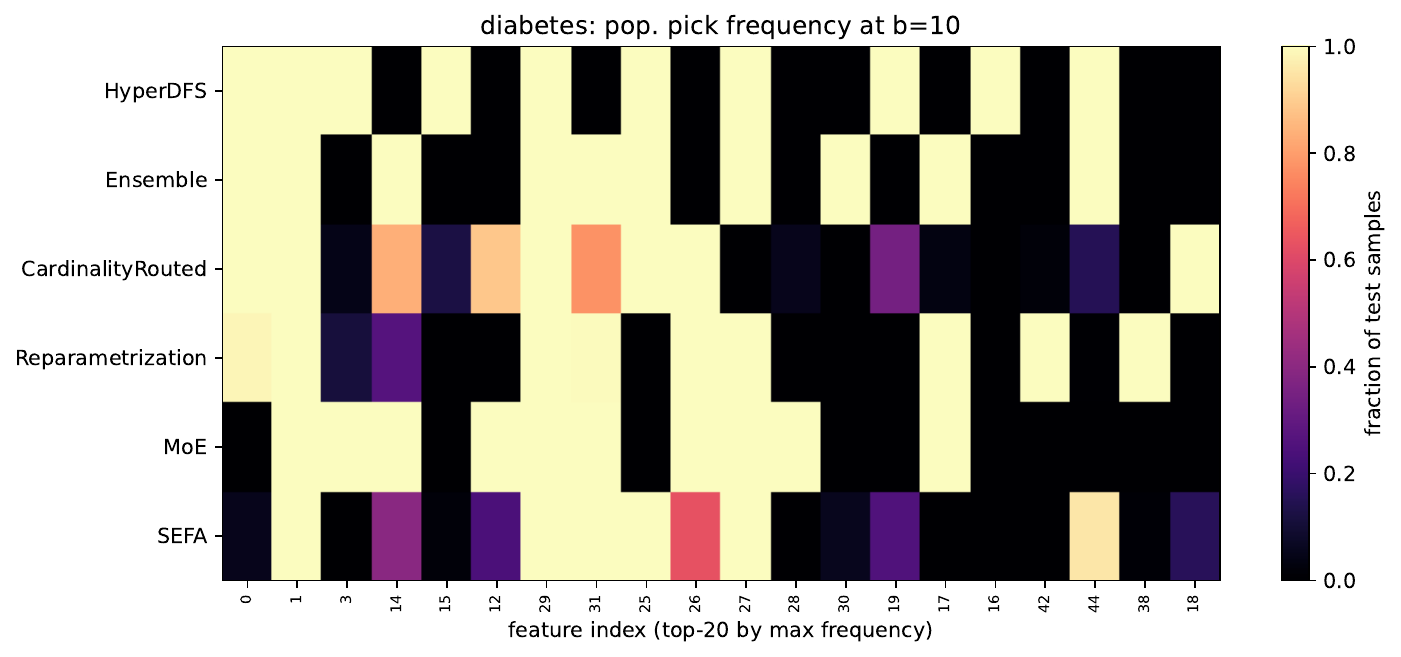}
        \caption{Diabetes}
        \label{fig:feature_freq_diabetes}
    \end{subfigure}
    \hfill
    \begin{subfigure}[t]{0.48\linewidth}
        \centering
        \includegraphics[width=\linewidth]{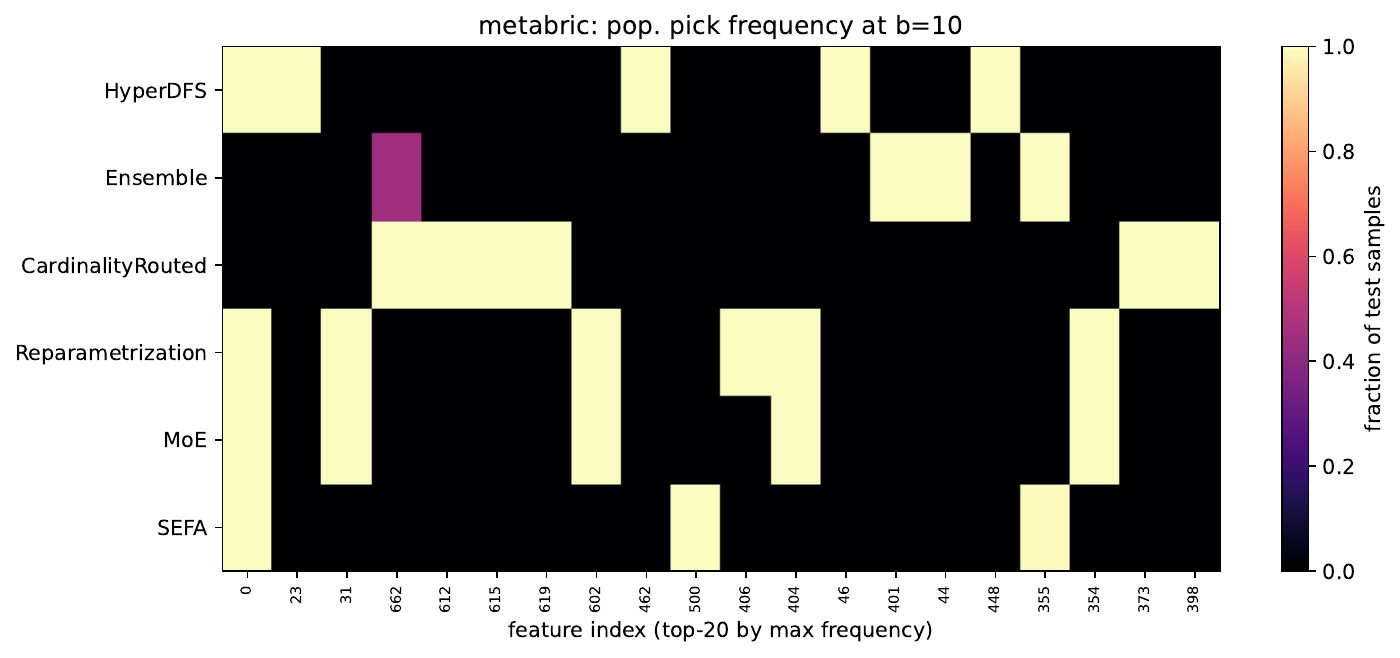}
        \caption{Metabric}
        \label{fig:feature_freq_metabric}
    \end{subfigure}
    \caption{Frequency of selection per sample of the top 20 most popular features selected overall by the corresponding methods for budget$=10$.}
    \label{fig:feature_frequency}
\end{figure}

Here, we show an example of the level of agreement in feature selection of the different DFS schemes tested. 
Figure \ref{fig:comparison_trajectories_budget} shows the Jaccard agreement for different budget levels of different DFS methods with \textsc{Hyper-DFS}. We can see there that the selection similarities in Metabric, where the number of features is very large, are significantly smaller than in diabetes, which is somewhat expected. There is no clear relationship between agreement with \textsc{Hyper-DFS} (or with any other method whatsoever) and performance, which indicates that many possible selections can give similar performance. This can explain the poor performance of DIME in datasets with large amounts of features, where it needs to estimate the CMI of all of them in order to make good decisions. Other methods just need to focus on which features make bigger performance increments, which is an easier task. 

A pairwise analysis of the feature similarities is also shown in Figure \ref{fig:jaccard_heatmaps}, where we can also see that in Metabric, sometimes the similarities are virtually 0 between feature selections. Although it is especially relevant to note that SEFA and Reparametrization (RePa), which are two other well-performing methods, have some significant overlap with \textsc{Hyper-DFS} as well as with each other. This suggests that there are some features that are key to reaching good decisions in some cases, no matter the model, even with such a large number of features. 

Figure \ref{fig:feature_frequency} shows the top 20 features most queried by the selectors for these two datasets as well. Here we can see that some methods indeed rely on the same features very often, while we also note significant divergences in Metabric as well. It is quite shocking that for Diabetes, the most popular feature is never queried by SEFA or MoE, which is another indication that models are learning different ways to get relevant information from different features as well, which might also be carrying redundant information. 

\subsection{Training times}

\begin{figure}
    \centering
    \includegraphics[width=0.6\linewidth]{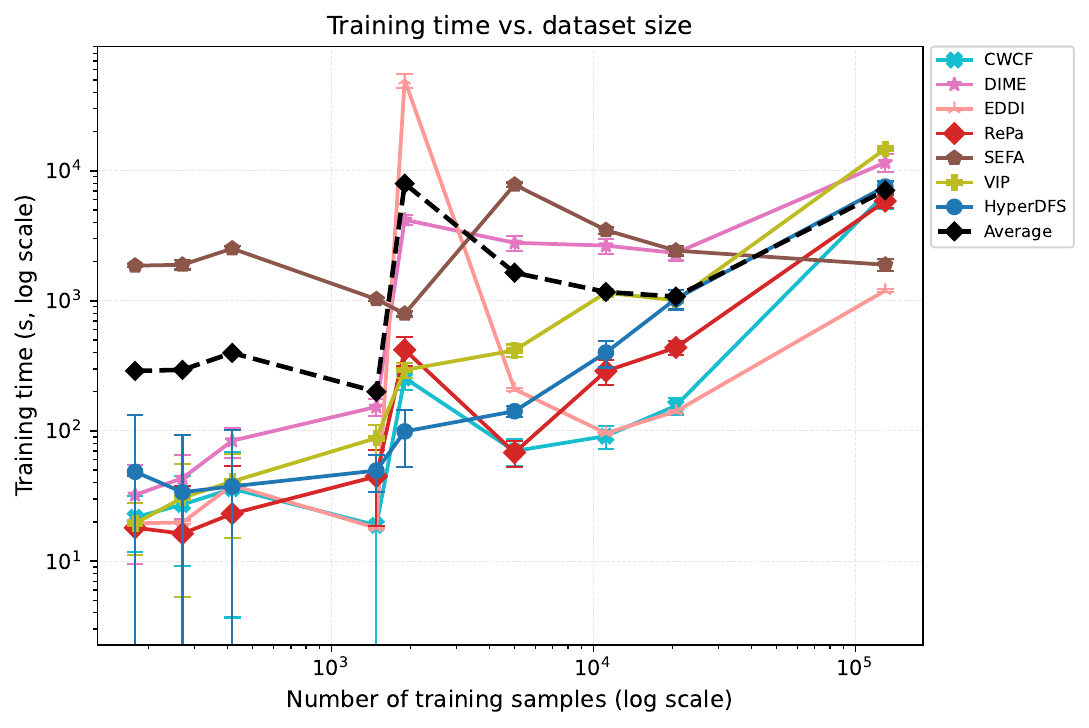}
    \caption{Running times for the different DFS algorithms tested in tabular datasets.}
    \label{fig:fit_time}
\end{figure}

Figure~\ref{fig:fit_time} reports wall-clock training times as a function of dataset size for \textsc{Hyper-DFS} and the single-model baselines using the same computational resources. On small datasets (fewer than ${\sim}500$ samples),  all methods are fast, with most completing within a minute. As dataset size grows, costs diverge considerably. SEFA is by far the most expensive method at medium and large scales, exceeding two hours on \textit{Diabetes}. DIME also becomes prohibitively slow at medium scale, exceeding 45 minutes on \textit{Bank} and \textit{California}. \textsc{Hyper-DFS} is below the average across all dataset sizes, and scales similarly to CWCF and Reparametrization at large $n$. On MiniBooNE ($n{=}130{,}064$), \textsc{Hyper-DFS} requires approximately the same training time as CWCF and Reparametrization (${\sim}$90--105 minutes), and is roughly $2.3\times$ faster than VIP.

\section{Reproducibility} \label{apx:implementation}

\subsection{Datasets Preprocessing Details} \label{apx:datasets_details}

  \begin{table}[t]                                                                                                                                                                            
      \centering
      \caption{Tabular benchmark datasets. $N$: samples, $M$: features, $C$: classes.}
      \label{tab:datasets}
      \begin{tabular}{llrrrc}
          \toprule
          Dataset & Domain & $N$ & $M$ & $C$ \\
          \midrule
          Wine~\citep{wine_109}                          & Scientific  &     178 &  13 &  3 \\
          Heart Disease~\citep{Detrano1989InternationalAO} & Medical   &     270 &  13 &  2 \\
          Cirrhosis~\citep{fleming2013counting}          & Medical     &     418 &  18 &  3 \\
          Yeast~\citep{yeast_110}                        & Biology     &   1{,}484 &   8 & 10 \\
          METABRIC~\citep{curtis2012genomic}             & Genomics    &   1{,}904 & 689 &  2 \\
          Bank Marketing~\citep{moro2014data}            & Financial   &  11{,}162 &  16 &  2 \\
          California Housing~\citep{pace1997sparse}      & Real estate &  20{,}640 &   9 &  2 \\
          Diabetes~\citep{strack2014impact}   & Medical     &   101766 &  45 &  3 \\
          MiniBooNE~\citep{roe2005boosted}               & Physics     & 130{,}064 &  50 &  2 \\
          \bottomrule
      \end{tabular}
  \end{table}

  \begin{table}[t]                                                      
      \centering
      \caption{Synthetic benchmark datasets. $M$: features, $C$: classes,                                                                                                                     
               $M^*$: number of features relevant to the Bayes-optimal predictor.
               All datasets use $N=10{,}000$ generated samples.}
      \label{tab:synthetic_datasets}
      \adjustbox{width=\linewidth}{
      \begin{tabular}{llcccl}
          \toprule
          Dataset & Reference & $M$ & $C$ & $M^*$ & Signal structure \\
          \midrule
          Sim1 & \citet{yoon2018invase} & 11 & 2 & 6 & Linear combination of fixed feature set \\
          Sim2 & \citet{yoon2018invase} & 11 & 2 & 6 & Nonlinear interactions within fixed feature set \\
          Sim3 & \citet{yoon2018invase} & 11 & 2 & 3--5 & Instance-wise gating; relevant set depends on $x_{10}$ \\
          Cube & \citet{kompella2016optimal} & 20 & 8 & 3 & Class-specific discriminative feature triplets \\
          \midrule
          Syn.\ Pairs & Ours & 12 & 2 & 6 & Product interactions; zero marginal signal per feature \\
          Proxy Sub.  & Ours & 10 & 2 & 5 & Heteroscedastic proxies; precision-weighted Bayes optimum \\
          \bottomrule
      \end{tabular}
      }
  \end{table}
                
  \paragraph{Synthetic datasets.}                           
  Table~\ref{tab:synthetic_datasets} summarises the six synthetic benchmarks. All datasets are generated with $N=10{,}000$ samples.
  \textbf{Sim1--3} and \textbf{Cube} follow the original protocols of
  \citet{yoon2018invase} and \citet{kompella2016optimal} respectively, and are included for comparability with the DFS literature.
  These benchmarks favour instance-wise feature selectors: once the relevant feature set is identified, the Bayes-optimal predictor is essentially the same function regardless of which subset is observed.

\textbf{Synergistic Pairs} ($M=12$) consists of six independent feature pairs, each drawn i.i.d.\ from $\mathcal{N}(0,1)$. Three pairs $(x_1, x_2)$, $(x_3, x_4)$, $(x_5, x_6)$ are signal pairs; the remaining three are pure noise. The label is generated as
\begin{equation}
    p(y=1 \mid \mathbf{x}) = \sigma\!\left(x_1 x_2 + x_3 x_4 + x_5 x_6\right),
\end{equation}
where $\sigma$ is the logistic function. Because $\mathbb{E}[x_i x_j \mid x_i] = x_i \cdot \mathbb{E}[x_j] = 0$ for independent features, every individual feature has zero mutual information with $y$. Signal is only accessible when both members of a pair are observed simultaneously, which requires the predictor to adapt its weights to the observed subset.

 \textbf{Proxy Substitution} ($M=10$) generates a shared latent variable $z \sim \mathcal{N}(0,1)$ observed through five heteroscedastic proxies $x_i = z + \varepsilon_i$, $\varepsilon_i \sim \mathcal{N}(0, \sigma_i^2)$ with $\boldsymbol{\sigma} = (0.1,\, 0.5,\, 1.0,\, 2.5,\, 5.0)$, and five independent noise features $x_6, \ldots, x_{10} \sim \mathcal{N}(0,1)$. Labels are $y = \mathbf{1}[z > 0]$. The Bayes-optimal linear predictor for any observed subset $S$ is the Gauss-Markov precision-weighted estimator
\begin{equation}
    \hat{z}(S) =
        \frac{\displaystyle\sum_{i \in S_p} x_i / \sigma_i^2}
             {1 + \displaystyle\sum_{i \in S_p} 1/\sigma_i^2},
    \qquad
    p(y=1 \mid \mathbf{x}_S) =
        \Phi\!\left(\hat{z}(S) \cdot
        \sqrt{1 + \textstyle\sum_{i \in S_p} \sigma_i^{-2}}\right),
\end{equation}
where $S_p = S \cap \{1, \ldots, 5\}$ and $\Phi$ is the standard normal cumulative distribution function. The optimal weight for each proxy depends on which other proxies are available, changing substantially with the observed subset and making fixed-weight single models systematically suboptimal.

  \paragraph{Tabular datasets.}
  Table~\ref{tab:datasets} summarises the nine real-world benchmarks.
  All datasets undergo the same preprocessing pipeline, applied
  independently within each cross-validation fold to prevent data leakage: numeric features are imputed at the training-fold median and standardised to zero mean and unit variance. Categorical features are imputed with the training-fold mode. For California Housing, the median house value target is binarised at its global median. For METABRIC, we dropped survival months and death from cancer fields to prevent target leakage, and also the patient id row.

\begin{table}[t]
    \centering
    \caption{Image benchmark datasets under patch-based sequential acquisition.
             $M$: number of patches (acquisition steps), $d$: patch feature
             dimension, $C$: classes. $^\dagger$ subsampled from the original size.}
    \label{tab:image_datasets}
    \begin{tabular}{llrrrrl}
        \toprule
        Dataset & Reference & $N_\text{train}$ & $N_\text{test}$ & $M$ & $d$ & $C$ \\
        \midrule
        MNIST         & \citet{lecun2002}       & 60{,}000 & 10{,}000 & 49 &  16 & 10 \\
        Fashion MNIST & \citet{xiao2017fashion} & 60{,}000 & 10{,}000 & 49 &  16 & 10 \\
        SVHN          & \citet{netzer2011svhn}  & 73{,}257 & 26{,}032 & 64 &  48 & 10 \\
        Imagenette    & \citet{howard2019}      &  9{,}469 &  3{,}925 & 25 & 512 & 10 \\
        PCam$^\dagger$          & \citet{veeling2018pcam} & 50{,}000 &     10{,}000   & 64 &  144 &  2 \\
        \bottomrule
    \end{tabular}
\end{table}

  \paragraph{Image datasets.}
Table~\ref{tab:image_datasets} summarises the five image benchmarks.
All datasets are evaluated under the same protocol: each image is divided into a spatial grid of non-overlapping patches, and the budget $b$ controls how many patches are revealed at test time. The acquisition order is determined by the model's policy and all remaining patches are masked (zeroed) before prediction.

MNIST and Fashion MNIST: each $28\times28$ grayscale image is divided into a $7\times7$ grid of $4\times4$ pixel patches, yielding $M=49$ patches with a raw pixel representation of dimension $d=16$. Pixel values are normalised to $[0,1]$.

SVHN~\citep{netzer2011svhn} consists of $32\times32$ RGB street-view digit images. Each image is divided into an $8\times8$ grid of $4\times4$ pixel patches, yielding $M=64$ patches of dimension $d=48$ (raw RGB pixels, normalised to $[0,1]$). For this dataset, in \textsc{Hyper-DFS}, we use a fix the number of knowledge statuses per batch in the predictor training phase to $3$.

Imagenette is a 10-class subset of ImageNet comprising higher-resolution natural images. Raw pixels are not used directly; instead, images are resized to $160$px and centre-cropped to $128\times128$ before being passed through a frozen ResNet-18 backbone pretrained on ImageNet~\citep{he2016deep}. The spatial feature map from the final residual block is pooled to a $5\times5$ grid and reshaped to $M=25$ patch tokens of dimension $d=512$. The backbone weights are fixed throughout all experiments; only the DFS model heads are trained. Note that the frozen backbone might have been trained on some of the test images of Imagenette, so this comparison should be seen as a controlled comparison between different DFS methods and not as an absolute performance study.

PCam~\citep{veeling2018pcam} is a binary histopathology dataset of $96\times96$ RGB lymph node patches, labelled by the presence of metastatic tissue in their centre $32\times32$ region. We divide the image into an $8\times8$ grid of $12\times12$ pixel patches, yielding $M=64$ patches of dimension $d=144$. The label in this dataset is determined by the central region, which the acquisition policy must learn to prioritise. Due to the scale of the dataset, we use a stratified subsample of 50{,}000 images and evaluate via $k$-fold cross-validation (no fixed test split).

\subsection{Model training details}

\paragraph{About the uniform sampling.} Eq.~\eqref{eq:single_model_obj} and Eq.~\eqref{eq:hyperdfs_ob_obj} use a uniform distribution over $S$. Sampling $S$ uniformly over $2^{[M]}$ via independent Bernoulli$(1/2)$ on each variable concentrates the cardinality $|S|$ around $M/2$, leaving small and large subsets severely under-represented. To obtain balanced coverage across cardinalities, we instead sample the cardinality first and then a set of that cardinality: $k \sim \mathcal{U}(\{1,\dots,M\})$, $S \sim \mathcal{U}\!\left(\binom{[M]}{k}\right)$.

\paragraph{\textsc{Hyper-DFS}.} Table~\ref{tab:hparams} lists the fixed hyperparameter configuration used in all \textsc{Hyper-DFS} experiments. The compressor is only used for image datasets to reduce the input size of the hypernetwork. We use a linear LR warmup that ramps the learning rate from $\eta/100$ to $\eta$ over the first $5$ epochs, after which a cosine schedule decays it back to $\eta/100$ over the remaining epochs.

  \begin{table}[t]
      \centering
      \caption{\textsc{Hyper-DFS} training hyperparameters.}
      \label{tab:hparams}
      \begin{tabular}{ll}
          \toprule
          Hyperparameter & Value\\
          \midrule
          Optimiser         & Adam                               \\
          Learning rate & $10^{-2}$                       \\
          Weight decay      & $10^{-4}$                          \\
          Gradient clip norm & $5.0$                             \\
          LR schedule       & Cosine ($\eta_\mathrm{min}{=}0.01\eta$) \\
          Max epochs        & 200                                \\
          Early stopping patience & 30 \\
          Validation fraction & 10\% with 5-fold evaluation                             \\
          \midrule
          Noise $\sigma$ for training    & 0.20                               \\
          $\lambda_\mathrm{scale}$ & 0.1                         \\
          Scale warm-up epochs & 50                              \\
          $\lambda_\mathrm{collapse}$ & 0.01                     \\
          \midrule
          Primary mid layer size / Number of layers  & 64 / 2                                 \\
          Hypernetwork mid layer size /Number of layers & 128 /2  \\
          Compressor output dim (only on image datasets) & 16 \\
          \bottomrule
      \end{tabular}
  \end{table}

  \paragraph{Multi-Model Baselines} \textbf{Ensemble} averages the models resulting from different seed initialisations ($K=5$). \textbf{CardinalityRouted} trains two specialist MLPs with hidden layers $(64, 32)$, one for subsets of cardinality $|S| \leq \lfloor M/2 \rfloor$ and one for $|S| > \lfloor M/2 \rfloor$.  Each specialist is trained only on its assigned cardinality range; at inference, the appropriate specialist is selected by hard routing. \textbf{MoE} uses $K=5$ expert MLPs (hidden dim 64) with a router using a routing embedding dimension of 64. A load-balancing auxiliary loss ($\lambda_\text{balance}{=}0.01$) encourages uniform expert utilisation during training.

  \paragraph{Literature Single-Model Baselines} For all literature baselines, we use the original published implementations when available. Hyperparameters are set to the author-recommended values from the respective papers.

  \paragraph{Computational resources} Experiments were carried out on a local workstation (AMD Ryzen 5 5600X, 6 cores / 12 threads, paired with a single NVIDIA RTX 5050 8GB) running Ubuntu and a GPU server with compute nodes with a NVIDIA GTX 1080 Ti. The full synthetic protocol and the tabular benchmark sweep (7 datasets x all baselines x multiple seeds and feature budgets) together amount to on the order of a few hundred single-GPU hours. Individual jobs typically complete in minutes to an hour.
  
\newpage
\end{document}